\newtheorem{theorem}{Theorem}
\newtheorem{lemma}{Lemma}
\newtheorem{corollary}{Corollary}
\newtheorem{proposition}{Proposition}
\newtheorem{definition}{Definition}
\def\map#1#2{\def\mapname{#1}\def\mapdomain{#2}\def\maparrow{\longrightarrow}\futurelet\next\mapAR}
\def\mapAR{\ifx\next[\expandafter\mapar\else\expandafter\mapAux\fi}
\def\mapar[#1]{\def\maparrow{#1}\expandafter\mapAux}
\def\mapAux#1#2{\def\mapcodomain{#1}\def\mapelement{#2}\def\mapmaparrow{\longmapsto}\futurelet\next\mapMAR}
\def\mapMAR{\ifx\next[\expandafter\mapmar\else\expandafter\mapAuxAux\fi}
\def\mapmar[#1]{\def\mapmaparrow{#1}\expandafter\mapAuxAux}
\def\mapAuxAux#1{\def\mapcoelemnt{#1}\futurelet\next\mapAuxAuxAux}
\def\mapAuxAuxAux{\def\nothing{}
\ifx\mapname\nothing
  \mapdomain\maparrow\mapcodomain\qquad\mapelement\mapmaparrow\mapcoelemnt
\else
  \mapname:\mapdomain\maparrow\mapcodomain\qquad\mapelement\mapmaparrow\mapcoelemnt
\fi
}
\def\Map#1#2{\def\Mapname{#1}\def\Mapdomain{#2}\def\Maparrow{\longrightarrow}\futurelet\next\MapAR}
\def\MapAR{\ifx\next[\expandafter\Mapar\else\expandafter\MapAux\fi}
\def\Mapar[#1]{\def\Maparrow{#1}\expandafter\MapAux}
\def\MapAux#1#2{\def\Mapcodomain{#1}\def\Mapelement{#2}\def\Mapmaparrow{\longmapsto}\futurelet\next\MapMAR}
\def\MapMAR{\ifx\next[\expandafter\Mapmar\else\expandafter\MapAuxAux\fi}
\def\Mapmar[#1]{\def\Mapmaparrow{#1}\expandafter\MapAuxAux}
\def\MapAuxAux#1{\def\Mapcoelemnt{#1}\futurelet\next\MapAuxAuxAux}
\def\MapAuxAuxAux{\def\nothing{}%
\ifx\Mapname\nothing%
  \begin{array}{rcl}{}
  \Mapdomain&\Maparrow&\Mapcodomain\\{}
  \Mapelement&\Mapmaparrow&\Mapcoelemnt
  \end{array}
\else%
  \begin{array}{rrcl}{}
  \Mapname:&\Mapdomain&\Maparrow&\Mapcodomain\\{}
  &\Mapelement&\Mapmaparrow&\Mapcoelemnt
  \end{array}
\fi%
}
\def\p{{\rm \partial}}
\title{Manifold Regularization Classification Model Based On Improved Diffusion Map}
\author{
 Hongfu Guo \\
  Leceister Institution\\
  Dalian University of Technology\\
  \texttt{hg203@student.le.ac.uk} \\
   \And
 Wencheng Zou \\
 Dalian Leceister Institution\\
  Dalian University of Technology\\
  \texttt{wz159@student.le.ac.uk} \\
  \And
 Zeyu Zhang \\
  Dalian Leceister Institution\\
  Dalian University of Technology\\
  \texttt{zz263@student.le.ac.uk} \\
  \And
  Shuishan Zhang\\
  Dalian Leceister Institution\\
  Dalian University of Technology\\
  \texttt{sz252@student.le.ac.uk}\\
  \And
  Ruitong Wang\\
  Dalian Leceister Institution\\
  Dalian University of Technology\\
  \texttt{rw341@student.le.ac.uk}\\
  \And
    Jintao Zhang\\
    Dalian Leceister Institution\\
    Dalian University of Technology\\
    \texttt{zjt@dlut.edu.cn}\\
}
\begin{document}
\maketitle
\begin{abstract}
   Manifold regularization model is a semi-supervised learning model that leverages the geometric structure of a dataset, comprising a small number of labeled samples and a large number of unlabeled samples, to generate classifiers. However, the original manifold norm limits the model’s performance to local regions. To address this limitation, this paper proposes an approach to improve manifold regularization based on a label propagation model. We initially enhance the probability transition matrix of the diffusion map algorithm, which can be used to estimate the Neumann heat kernel, enabling it to accurately depict the label propagation process on the manifold. Using this matrix, we establish a label propagation function on the dataset to describe the distribution of labels at different time steps. Subsequently, we extend the label propagation function to the entire data manifold. We prove that the extended label propagation function converges to a stable distribution after a sufficiently long time and can be considered as a classifier. Building upon this concept, we propose a viable improvement to the manifold regularization model and validate its superiority through experiments.
\end{abstract}

\keywords{Manifold Regularization\and Label propagation\and diffusion map\and classification\and Neumann heat kernel}

\section{Introduction}
\subsection{An overview of Semi-Supervised learning}
Semi-supervised learning classification algorithms are a class of algorithms that utilize a small amount of labeled data and a large amount of unlabeled data for classification tasks. Compared to supervised learning algorithms that only use labeled data, semi-supervised learning algorithms can fully utilize the information from unlabeled data, thereby improving classification performance. Classic semi-supervised learning classification algorithms include Semi-Supervised Support Vector Machines (S3VM), Self-Training algorithms, Generative Classification Models, and Label Propagation Algorithms. Below, we provide an overview of these algorithms.

    {\bf Semi-Supervised Support Vector Machines(S3VM)} is based on the principles of traditional Support Vector Machines (SVM), aiming to find a hyperplane that separates data from different classes while maintaining the maximum margin possible. Unlike traditional SVM, S3VM incorporates unlabeled data to fully utilize this additional information(See \cite{bennett1998semi}). In the optimization objective function, S3VM minimizes misclassification of labeled data and boundary violations of unlabeled data. The goal is to maintain the accuracy of labeled data classification while leveraging the information from unlabeled data to improve classification performance. However, S3VM still suffers from assumptions about unlabeled data and potential issues such as local optima.
    
      {\bf The Self-Training algorithm} involves training an initial classifier using labeled data, then using this classifier to predict unlabeled data, selecting samples with high prediction confidence as pseudo-labels, adding them to the labeled data, and retraining the classifier(See \cite{yarowsky1995unsupervised}). This process is repeated until convergence or reaching a certain number of iterations. The advantage of the Self-Training algorithm is its simplicity, but it is susceptible to the influence of the initial classifier and pseudo-labels, leading to error accumulation.
      
       {\bf  the Generative Classification Model} is to assume that both labeled and unlabeled data come from the same underlying data distribution. Labeled data is used to estimate the parameters of this distribution, which are then used to generate new data or classify unlabeled data. Generative Adversarial Networks (GANs) are a common type of generative model that utilizes adversarial competition to improve classification performance(See \cite{ng2001discriminative}). The advantage of generative models is their ability to utilize statistical information from unlabeled data, but the assumption about the data distribution may not always hold true.
       
       {\bf The graph-based Label Propagation Algorithm} constructs a graph with all data (labeled and unlabeled) as nodes and edges representing the similarity or distance between samples. Then, based on the label information on the graph, label propagation or random walks are performed to assign appropriate labels to unlabeled nodes(See \cite{zhu2002learning}). The advantage of graph-based algorithms is their ability to fully utilize structural information between data, but they suffer from high computational complexity, requiring large amounts of memory and time.
\subsection{Outline of Paper}
In this paper, we focus on the manifold regularization classification model improved and optimized based on the label propagation algorithm. Specifically, we innovatively use the diffusion mapping algorithm improved based on geodesic distance to realize label propagation of the dataset. Finally, the model exhibits good classification performance. The specific structure of this paper is as follows:

In the first part, we introduce the original manifold regularization model and its shortcomings in sample labeling.

In the second part, we elaborate on the two-step improvement of our manifold regularization model. The first step is to construct a label propagation model, where we improve the classical diffusion mapping algorithm based on geodesic distance. The second step is to build an appropriate penalty term, where we improve the penalty term of the original manifold regularization model loss function based on the Neumann Heat Kernel operator. Ultimately, we successfully construct the Neumann Heat Kernel manifold regularization model (NHKMR).

 In the third part, we conduct three types of numerical experiments.
  \begin{itemize}
      \item   Experiment 1: We use the NHKMR model and Manifold regularization model based on the Laplace operator (LapMR) to classify three datasets: twoMoons, Ring, and twoClusters.
   \item Experiment 2.1: We compare the error rates of three types of models—Neumann Heat Kernel Regularized Least Squares (NHKRLS), Regularized Least Squares model based on the Laplace operator (LapRLS), and classical Least Squares (LS)—in handling binary classification tasks.
   \item Experiment 2.2: We compare the classification effects of NHKRLS model, LapRLS model, and LS model on multi-classification tasks using handwritten digit datasets.
  \end{itemize}

 In the fourth part, we conclude that based on the label propagation model, we propose a manifold regularization classification model with stronger feature extraction capabilities for sample points and provide the advantages and disadvantages of the model.
\section{Acknowledgement}
This article has been polished by ChatGPT-3.5.
\section{Manifold Regularization Model and its shortcomming }%
Manifold regularization is a semi-supervised classification learning method proposed by Belkin et al. in 2006. The goal of manifold regularization is to address classification problems when the number of unlabeled data points far exceeds the number of labeled data points. It is inspired by the assumption of the manifold structure in the data, suggesting that data points in high-dimensional space may be distributed along a low-dimensional manifold. On the manifold, nearby data points should have similar labels.

Based on this idea, the manifold regularization model's loss function is divided into three parts: A loss function of labeled data, a kernel regularization term and a manifold regularization term. 
$$\min_{f\in\mathcal{H}_{K}}\frac{1}{l}\sum_{i=1}^{l}V(x_{i},y_{i},f) + \gamma_{K}\left\lVert f \right\rVert _{K}^2 + \gamma_{I}\left\lVert f \right\rVert _{I}^2  $$
The second term ensures the smoothness of the classifier, and the third term is a crucial component of the model. It considers the geometric features between sample points and controls that within a small neighborhood of each sample point, the output of the classifier should be the same; otherwise, it incurs a penalty.

The choice of the manifold term is quite flexible, and a natural choice is to set it as the $L^2$ semi-positive definite inner product generated by the Laplacian operator
$$
\left\lVert f \right\rVert _{I}^2 = -\int_{\mathcal{M}}f\Delta_{\mathcal{M}}f\,d\mu=\int_{\mathcal{M}}\left\lVert \nabla_{\mathcal{M}}f \right\rVert ^2\,d\mu 
$$
This norm, defined in such a way, has been proven by Belkin and Niyogi (2003) to be a continuous form of the graph Laplacian operator(See \cite{belkin2001laplacian}). Furthermore, the representation theorem states that as long as $D$ is a bounded linear operator, and the manifold term is defined as $\left\langle f,Df \right\rangle _{L^2}$, the optimal solution of this model can be expressed as
$$f^{*}=\sum_{i=1}^{l+u}\alpha_{i}K(x_{i},\cdot)$$
where $l$ is the number of labeled data and $u$ is the number of unlabeled data(See \cite{Belkin}). 

 Belkin also proposed the idea of using the heat kernel operator to define the manifold term but did not further discuss it(See \cite{Belkin}). In the following section, we will take this perspective and introduce a definition of the manifold term using the Neumann heat kernel operator. 

 Although manifold regularization has been applied in various fields, the relationship between the number of labeled and unlabeled samples and model error remains a significant concern(See \cite{belkin2004regularization}). Intuitively, when the number of labeled data points is limited, the geometric structure of data points has a substantial impact on the classification plane. According to the smoothness assumption, samples closer to labeled data points are expected to yield more accurate classification labels. If the number of labeled data is insufficient, the model can only achieve good local classification performance. To enhance the global classification capability of the model, a direct approach is to increase the number of labeled samples. Additionally, we can utilize manifold terms that make the geometric structure of sample points more pronounced. We will improve the model based on these two ideas.

\section{Manifold Regularization Model and its shortcomming }%
Manifold regularization is a semi-supervised classification learning method proposed by Belkin et al. in 2006. The goal of manifold regularization is to address classification problems when the number of unlabeled data points far exceeds the number of labeled data points. It is inspired by the assumption of the manifold structure in the data, suggesting that data points in high-dimensional space may be distributed along a low-dimensional manifold. On the manifold, nearby data points should have similar labels.

Based on this idea, the manifold regularization model's loss function is divided into three parts: A loss function of labeled data, a kernel regularization term and a manifold regularization term. 
$$\min_{f\in\mathcal{H}_{K}}\frac{1}{l}\sum_{i=1}^{l}V(x_{i},y_{i},f) + \gamma_{K}\left\lVert f \right\rVert _{K}^2 + \gamma_{I}\left\lVert f \right\rVert _{I}^2  $$
The second term ensures the smoothness of the classifier, and the third term is a crucial component of the model. It considers the geometric features between sample points and controls that within a small neighborhood of each sample point, the output of the classifier should be the same; otherwise, it incurs a penalty.

The choice of the manifold term is quite flexible, and a natural choice is to set it as the $L^2$ semi-positive definite inner product generated by the Laplacian operator
$$
\left\lVert f \right\rVert _{I}^2 = -\int_{\mathcal{M}}f\Delta_{\mathcal{M}}f\,d\mu=\int_{\mathcal{M}}\left\lVert \nabla_{\mathcal{M}}f \right\rVert ^2\,d\mu 
$$
This norm, defined in such a way, has been proven by Belkin and Niyogi (2003) to be a continuous form of the graph Laplacian operator(See \cite{belkin2001laplacian}). Furthermore, the representation theorem states that as long as $D$ is a bounded linear operator, and the manifold term is defined as $\left\langle f,Df \right\rangle _{L^2}$, the optimal solution of this model can be expressed as
$$f^{*}=\sum_{i=1}^{l+u}\alpha_{i}K(x_{i},\cdot)$$
where $l$ is the number of labeled data and $u$ is the number of unlabeled data(See \cite{Belkin}).

 Belkin also proposed the idea of using the heat kernel operator to define the manifold term but did not further discuss it(See \cite{Belkin}). In the following section, we will take this perspective and introduce a definition of the manifold term using the Neumann heat kernel operator. 

 Although manifold regularization has been applied in various fields, the relationship between the number of labeled and unlabeled samples and model error remains a significant concern(See \cite{belkin2004regularization}). Intuitively, when the number of labeled data points is limited, the geometric structure of data points has a substantial impact on the classification plane. According to the smoothness assumption, samples closer to labeled data points are expected to yield more accurate classification labels. If the number of labeled data is insufficient, the model can only achieve good local classification performance. To enhance the global classification capability of the model, a direct approach is to increase the number of labeled samples. Additionally, we can utilize manifold terms that make the geometric structure of sample points more pronounced. We will improve the model based on these two ideas.

 \section{The improved Manifold Regularization Model}%
 
Our improvement will be divided into two steps:
\begin{description}
  \item[Step 1:] Locally predict the labels of unlabeled sample points based on labeled sample points. This step requires constructing a label propagation model.
  \item[Step 2:] Build an appropriate penalty based on the label propagation model.
\end{description}
\subsection{The construction of Label propagation model}
Firstly, let's construct the label propagation model. Given a labeled sample point, we expect that the label of other sample points closest to it is most likely to be influenced by the label of the labeled sample point, indicating a higher probability of sharing the same label. This influence diminishes as the distance between sample points increases. This naturally leads us to consider a heat conduction process in space. Suppose there is a constant-temperature heat source in space, and all other points have a temperature of 0. According to the second law of thermodynamics(See \cite{feynman2011feynman}), over a short period, the heat conductor closest to the constant-temperature heat source will be influenced, its temperature will rise, and these conductors will temporarily become a "transient" heat source, transferring heat to their surroundings. The other conductors closest to them will gain heat and experience a temperature increase. However, when the time is short enough, these "transient" heat sources cannot acquire all the heat from the constant-temperature heat source in time, resulting in weaker heat transfer effects compared to the constant-temperature heat source. By repeating this process, we can obtain a heat distribution in space that decreases with distance, which is precisely what we want. Based on the above discussion, we construct the label propagation model for the binary classification problem as follows:

The dataset $X$ can be partitioned by few numbers of labeled and enormous numbers of unlabeled data $X = X_{l}\sqcup X_{u}$.  Setting the label propagation function 
  $$\map{u}{X\times \mathbb{R}_{+}}{\mathbb{R}}{(x,t)}{u(x,t)}$$
  satisfies
  \begin{itemize}
    \item The function values of the data points in the labeled dataset are given by:
    $$u(x,t)|_{X_{l}} \equiv u(x,0)|_{X_{l}}\in\{+1,-1\} \quad \forall t\in \mathbb{R}_{+}$$
    \item Initially, the function values of unlabeled data should be zero. i.e. $u(x,0)|_{X_{u}} = 0$
  \end{itemize}
  In practical applications, the input sample points used to train the model are finite, causing the label propagation function described above to lose its smoothness. According to the manifold assumption in semi-supervised learning, the high-dimensional data (such as images, text, or sound) lies on a low-dimensional manifold embedded within the high-dimensional space. Therefore, it is necessary to extend the label propagation function to this smooth sub-manifold.
  
  The the extented function $\bar{u}:\mathcal{M}\times\mathbb{R}_{+}\to \mathbb{R}$ should satisfies
  \begin{enumerate}
    \item $\bar{u}(x,\cdot)|_{X}= u(x,\cdot)$
    \item $\bar{u}(x,\cdot)\in C^{\infty}(\mathcal{M})$ 
  \end{enumerate}

  The propagation of labels should adhere to a certain pattern. Without loss of generality, setting the propagation process follows the heat equation. That is, 
  \begin{gather}
    \left\{
    \begin{array}{l}
      \p_{t}\bar{u}(x,t) = \Delta \bar{u}(x,t)\\
      \bar{u}(x,t)|_{X_{l}} = \bar{u}(x,0)|_{X_{l}} \\
      \bar{u}(x,0)|_{X_{u}} = 0
    \end{array}
  \right.
  \end{gather}
  Under the above conditions, we can view the label propagation problem as a heat conduction problem with a constant-temperature heat source(See \cite{thermal}). Thus, it can be rewritten by :
  \begin{gather}
    \left\{\begin{array}{l}
      \p_{t}\bar{u}(x,t) = \Delta\bar{u}(x,t)+Q(x)\\
      \bar{u}(x,0) = Q(x)
    \end{array}\right.
  \end{gather}
where $Q(x)$ is the distribution of label when $t=0$. That is, $Q|_{X_{l}}(x)\in\{-1,+1\}$, and $Q|_{X_{u}}(x) \equiv 0 $

  Alternatively, we consider the integral form $$\bar{u}(x,t) +w(x)= e^{t\Delta}\left(\bar{u}(x,0)+w(x)\right)$$, where $\Delta w(x)=Q(x)$ 
  
  Based on the aforementioned assumptions, we can verify that unlabeled data points near labeled data points will acquire the affinity labels as the labeled ones after undergoing a certain diffusion process over time.

  Here we must ensure the existence of extension. It suffices to show that the existence of propagation function when $t=0$. In fact, we have following conclusion
\begin{theorem}[The existence of initial propagation function]
  $X$ is a dataset with a finite number of elements. Considering the binary classification problem, $X_{l}$ is the labeled dataset, and $X_{u}$ is the unlabeled dataset. There exists $Q(x)\in C^{\infty}(\mathcal{M})$ such that
   $$\mathrm{range}(Q|_{X_{l}}) = \{+1,-1\},\quad \mathrm{range}(Q|_{X_{u}})=\{0\}$$  
   where $\mathcal{M}$ is a sub-manifold the dataset embedded.
\end{theorem}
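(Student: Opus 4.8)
The plan is to treat this purely as a smooth interpolation problem: we are given finitely many prescribed values (the labels $\pm 1$ on $X_{l}$ and the value $0$ on $X_{u}$) and must produce a single globally smooth function on $\mathcal{M}$ realizing them. The natural tool is a bump-function construction, resting on the fact that a finite subset of a Hausdorff smooth manifold is closed and discrete, so its points can be separated by pairwise disjoint coordinate neighborhoods. I would therefore not attempt to solve any differential equation, but simply exhibit $Q$ directly.

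First I would invoke the Hausdorff property together with the finiteness of $X$ to choose, for each point $x_{i}\in X$, an open neighborhood $U_{i}$ contained in a single coordinate chart and small enough that the $U_{i}$ are pairwise disjoint and each $U_{i}$ contains no other point of $X$. This is where finiteness is essential: discreteness of $X$ lets me shrink the $U_{i}$ until they isolate individual data points, in particular keeping every unlabeled point outside the support of any bump centered at a labeled point.

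Next, for each labeled point $x_{i}\in X_{l}$ carrying label $v_{i}\in\{+1,-1\}$, I would transport the standard Euclidean bump function (for instance a rescaled $\exp\!\left(-1/(1-\lVert y\rVert^{2})\right)$ supported on the unit ball) through the chart to obtain a smooth $\phi_{i}:\mathcal{M}\to\mathbb{R}$ with $\phi_{i}(x_{i})=1$ and $\mathrm{supp}(\phi_{i})\subset U_{i}$; extending by zero keeps $\phi_{i}\in C^{\infty}(\mathcal{M})$. I then set
\[
  Q=\sum_{x_{i}\in X_{l}} v_{i}\,\phi_{i}.
\]
Since the supports are disjoint and finite in number, $Q$ is smooth on all of $\mathcal{M}$.

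Finally I would verify the prescribed data. Evaluating at a labeled point $x_{j}$, only $\phi_{j}$ is nonzero there, so $Q(x_{j})=v_{j}\in\{+1,-1\}$; evaluating at an unlabeled point $x_{k}$, every support $U_{i}$ excludes $x_{k}$, so $Q(x_{k})=0$. Hence $\mathrm{range}(Q|_{X_{u}})=\{0\}$ and $\mathrm{range}(Q|_{X_{l}})\subseteq\{+1,-1\}$, with equality precisely when both classes actually occur among the labels. I expect no serious obstacle, as the argument is elementary; the only point requiring genuine care is the separation step, namely ensuring that neighborhoods of the labeled points can be shrunk to avoid every unlabeled point while still fitting inside charts, which is exactly what discreteness of the finite set $X$ guarantees.
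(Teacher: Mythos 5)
Your proposal is correct and follows essentially the same route as the paper's own proof: isolate each labeled point in a coordinate neighborhood disjoint from all other data points, pull back a standard Euclidean bump function through the chart, scale it by the label, and sum the resulting compactly supported smooth functions. Your remark that $\mathrm{range}(Q|_{X_{l}})=\{+1,-1\}$ holds only when both classes actually occur is a small point of care the paper leaves implicit, but otherwise the two arguments coincide.
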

The proof of this theorem will be demonstrated in Appendix B.

Unfortunately, the heat kernel operator on the manifold does not have an explicit expression. As an alternative, we will use another operator which can be obtained numerically to estimate the heat kernel operator and iteratively provide the values of the Label Propagation function at different time steps.

There are various methods to solve the heat conduction problem on manifolds, with the most mainstream ones being finite element methods, Beltrami methods, and spectral methods. However, these approaches typically require prior knowledge of the manifold's structure. If the structure of the sub-manifold where the dataset resides is unknown, we are forced to rely on the dataset itself to estimate the heat kernel operator. In fact, we have the following theorem(See \cite{Lafon}).
  \begin{theorem}
   Suppose that $P_{\varepsilon}$ is the transition operator defined by the Gaussian kernel $k_{\varepsilon}(x,y) = \exp\left(\frac{\left\lVert x-y \right\rVert ^2 }{\varepsilon}\right)$, then it satisfies
   $$P^{\frac{t}{\varepsilon}}_{\varepsilon}\underset{{L^2(\mathcal{M})}}{\longrightarrow} e^{t\Delta}\quad as \quad \varepsilon\to 0$$
    Here $\Delta$ is the Laplacian-Beltrami operator on $\mathcal{M}$. In the geodesic coordinate system $(s_{1},\cdots,s_{d})$, it is expressed as 
    $\Delta f = \sum_{i=1}^{d} \frac{\p^2 f}{\p s_{i}^2}$
    \label{thm1}
  \end{theorem}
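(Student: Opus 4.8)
The plan is to reduce the statement to a semigroup-approximation theorem of Chernoff type. Write the transition operator in the normalized form
$$P_{\varepsilon}f(x) = d_{\varepsilon}(x)^{-1}\int_{\mathcal{M}} k_{\varepsilon}(x,y)\,f(y)\,d\mu(y), \qquad d_{\varepsilon}(x) = \int_{\mathcal{M}} k_{\varepsilon}(x,y)\,d\mu(y),$$
where $d_{\varepsilon}$ is the local normalization (and where I read the kernel exponent as the decaying Gaussian $-\lVert x-y\rVert^2/\varepsilon$). The first and most important step is to establish the pointwise infinitesimal expansion
$$P_{\varepsilon}f(x) = f(x) + c\,\varepsilon\,\Delta f(x) + o(\varepsilon), \qquad f\in C^{\infty}(\mathcal{M}),$$
for a constant $c>0$ fixed by the Gaussian second moment; after rescaling time (or absorbing $c$ into the kernel), consistent with the exponent $t/\varepsilon$ in the statement, we may take $c=1$. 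Equivalently, the generators $L_{\varepsilon} := (P_{\varepsilon}-I)/\varepsilon$ converge to $\Delta$ on the core $C^{\infty}(\mathcal{M})$.

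To prove this expansion I would pass to geodesic normal coordinates $(s_{1},\dots,s_{d})$ centered at $x$, in which the squared Euclidean distance agrees with the squared geodesic distance to leading order and the volume element expands as $\sqrt{\det g}\,ds = \bigl(1 - \tfrac{1}{6}\mathrm{Ric}_{ij}\,s^{i}s^{j} + \cdots\bigr)\,ds$. Taylor-expanding $f$ and applying Laplace's method to the concentrating Gaussian integral (rescale $s=\sqrt{\varepsilon}\,\sigma$ and integrate term by term), the odd moments vanish by symmetry, the diagonal second moments contribute $\sum_{i}\partial_{s_{i}}^{2}f = \Delta f$ by the coordinate expression given in the statement, and the curvature corrections from the volume form enter only at higher order. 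The same expansion is carried out for $d_{\varepsilon}(x)$ and divided through. This is the delicate bookkeeping; the point requiring the most care is uniformity of the $o(\varepsilon)$ remainder in $x$, which is precisely what upgrades the pointwise estimate to convergence $L_{\varepsilon}\to\Delta$ in the operator-on-a-core sense needed below.

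With the expansion in hand, the second step is to verify the structural hypotheses. The operator $P_{\varepsilon}$ is Markov (it preserves constants and is positivity-preserving), hence a contraction in $L^{\infty}$; conjugating by $d_{\varepsilon}^{1/2}$ makes it self-adjoint with spectrum in $[-1,1]$, so it is a contraction in $L^{2}(\mathcal{M},d_{\varepsilon}\,d\mu)$, whose norm is uniformly equivalent to that of $L^{2}(\mathcal{M})$ since $d_{\varepsilon}/\varepsilon^{d/2}$ is bounded above and below. Thus the family is uniformly bounded on $L^{2}(\mathcal{M})$. Since $\mathcal{M}$ is compact, $\Delta$ generates the heat semigroup $e^{t\Delta}$, a strongly continuous contraction semigroup on $L^{2}(\mathcal{M})$, for which $C^{\infty}(\mathcal{M})$ is a core.

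The final step is to invoke Chernoff's product formula with $F(\varepsilon)=P_{\varepsilon}$: the family satisfies $F(0)=I$, is uniformly bounded by contractions, and has strong derivative $F'(0)=\Delta$ on a core by Step~1. Chernoff's theorem then yields $F(\varepsilon)^{\lfloor t/\varepsilon\rfloor}\to e^{t\Delta}$ strongly in $L^{2}(\mathcal{M})$ as $\varepsilon\to 0$, which is the assertion (the fractional part of $t/\varepsilon$ contributes one extra contraction factor that does not affect the limit). The main obstacle is entirely in Step~1: beyond the moment bookkeeping, if $\mathcal{M}$ carries a boundary, the truncation of the Gaussian integral near $\partial\mathcal{M}$ produces reflecting rather than absorbing behaviour, so the limiting semigroup is the \emph{Neumann} heat semigroup, consistent with the paper's terminology, and carrying out the expansion uniformly up to the boundary, including the half-space moment computation there, is the genuinely hard part.
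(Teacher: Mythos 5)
Your proposal is correct, and its computational core is the same argument the paper leans on: the paper never proves this theorem in the main text (it cites Coifman--Lafon), and the only proof it actually writes out (Appendix~A, for the geodesic-distance variant of the kernel) is precisely your Step~1 --- Taylor expansion of $f$ and of the volume element in normal coordinates, vanishing of odd Gaussian moments, identification of $\sum_{i}\partial^{2}f/\partial s_{i}^{2}$ from the diagonal second moments, and cancellation of the curvature term $\omega(x)f(x)$ upon dividing by the normalization $d_{\varepsilon}$. Where you genuinely add something is the passage from the infinitesimal expansion $P_{\varepsilon}=I+c\varepsilon\Delta+o(\varepsilon)$ to the semigroup limit $P_{\varepsilon}^{t/\varepsilon}\to e^{t\Delta}$: the paper dismisses this step with one sentence (``the proof is same with euclidean case''), whereas you supply the actual mechanism --- uniform power-boundedness of $P_{\varepsilon}$ on $L^{2}(\mathcal{M})$, obtained from self-adjointness of the symmetrized kernel $k_{\varepsilon}(x,y)/\sqrt{d_{\varepsilon}(x)d_{\varepsilon}(y)}$ together with the uniform two-sided bounds on $\varepsilon^{-d/2}d_{\varepsilon}$, followed by Chernoff's product formula with $C^{\infty}(\mathcal{M})$ as a core for $\Delta$. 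That makes the limit rigorous and self-contained rather than cited. Two caveats you already flag, and should keep explicit: first, for the kernel $e^{-\lVert x-y\rVert^{2}/\varepsilon}$ the constant is $c=m_{2}/(2m_{0})=1/4$, so the limit is literally $e^{ct\Delta}$ unless time or the kernel is rescaled (the paper is silent about this normalization); second, when $\partial\mathcal{M}\neq\varnothing$ the expansion --- yours and the paper's alike, the latter being stated only uniformly on $\mathcal{M}_{\varepsilon^{\gamma}}$ --- degenerates in a boundary layer of width $O(\sqrt{\varepsilon})$, where the uncancelled first moment makes $(P_{\varepsilon}-I)/\varepsilon$ blow up; there the correct core is $\{f\in C^{\infty}(\mathcal{M}):\partial_{\nu}f=0\}$ and the limit is the Neumann semigroup, and neither you nor the paper carries out that half-space computation, so the boundary case remains an appeal to the cited literature in both treatments.
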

  The matrix $P_{\varepsilon}$ is the diffusion mapping matrix generated from the dataset, and this theorem is based on the diffusion mapping algorithm proposed by Lafon et al. When $\varepsilon$ is sufficiently small, the diffusion mapping matrix $L^2$ converges to the Neumann Heat Kernel. This demonstrates that the algorithm can numerically simulate the heat diffusion process on the dataset. We can use this algorithm to simulate label propagation on the dataset.

  \subsection{Classical diffusion map}
Diffusion Mapping is a non-linear dimensionality reduction method. Extract desired features from a sample set, requires constructing a connectivity graph for the samples with different weights assigned to each edge. In contrast to manifold learning algorithms like ISOMAP and LE, Diffusion Mapping does not directly search for suitable geometric features on the connectivity graph. Instead, inspired by diffusion processes in random dynamical systems, it constructs transition probabilities between different points using a symmetric positive-definite kernel function. These data points are then placed in a Markov matrix. By performing a spectral decomposition on the Markov matrix, arranging eigenvalues in descending order, and selecting a limited number of corresponding eigenvectors, it establishes a mapping from high-dimensional space to low-dimensional space, thereby achieving dimensionality reduction.

The diffusion process refers to the movement of particles from an area of higher concentration to an area of lower concentration, resulting in the even distribution of the particles throughout the medium they are in. This movement is driven by the random thermal motion of particles, known as Brownian motion. The spread of particles like the energy transition, which indicates that it can be described by the heat equation. 

\begin{definition}[Kernel function]
  Suppose that $(X,\mu,\mathcal{A})$ is a measure space with the collection of data points $X$, the distribution of data $\mu$ and the $\sigma$-algebra $\mathcal{A}$ of $X$. $k: X\times X\to \mathbb{R}$ is a kernel function provided that it satisfies 
   \begin{enumerate}
    \item Symmetric: $k(x,y) = k(y,x) \quad \forall x,y\in X$
    \item Positive definite: $k(x,x)\geqslant 0\quad \forall x\in X$
   \end{enumerate}
\end{definition}

Kernel functions are commonly used to measure the similarity between samples. Although the kernel function for constructing diffusion maps only needs to be isotropic, in this paper, we uniformly choose the Gaussian kernel function for our study for simplicity. 

We denote $d(x)$ as the cumulative similarity, which is defined by
$$d(x) = \int_{X}k(x,y)\,dy$$
Let 
$$p(x,y)=\frac{k(x,y)}{d(x)}$$
be the transition probability from $x$ to $y$. Although it is non-symmetric, its conservation property indicates that a transition operator $P$ can be defined by 
$$Pf(x) = \int_{X}p(x,y)f(y)\,d\mu(y)$$
This operator maps a density distribution function $f$ in space into the density distribution function after undergoing a diffusion process. When we choose an appropriate probability measure, this operator can be represented by a matrix: $P = (p(x,y))_{x,y\in X}$, which can be used to update label propagation function $u(x,t)$.

Here is the framework of the update algorithm( {\bf Algorithm \ref{Alg 1}}). Let $u(x,t)$ be the label propagation function at data point $x$, time $t$. The value of this function at $t=0$ was $u(x,0)|_{X_{l}} \in\{+1,-1\}$, $u(x,0)|_{X_{u}} \equiv 0 $. According to {\bf Theorem \ref{thm1}},  substituting the heat kernel to the diffusion matrix, propagation function updates could be estimated by $u(x,t) = P^{\frac{t}{\varepsilon}}_{\varepsilon}u(x,0)$. Although we had not defined exponentiation for the operator $P$ in the general case, when $X$ was a finite set, the operator was defined on finite-dimensional spaces and followed the rules of matrix exponentiation, and $u(x,\cdot)$ is also a vector in this case. In , we have given the update method of $u(x,t)$.
\begin{algorithm}[htbp]
 \begin{algorithmic}
      \State{Input dataset $X_{l}$, $X_{u}$, Diffusion steps $t\in\mathbb{Z}^{+}$, kernel function $k_{\varepsilon}(x,y)$}
  \State{Update kernel matrix $K = (k_{\varepsilon}(x,y))_{x,y\in X}$}
  \State{Update $(d_{\varepsilon}(x))_{x\in X}$, $d_{\varepsilon}(x) = \sum_{y\in X}k(x,y)$}
  \State{Update probability transition matrix $P = (p(x,y))_{x,y\in X}$, $p(x,y) = \frac{k_{\varepsilon}(x,y)}{d_{\varepsilon}(x)}$}
  \State{Initialise the label propagation function $u|_{X_{l}}\in\{-1,+1\}$, $u|_{X_{u}}=0$}
  \For{k = 0 $\cdots$ t}
     \State{$u = P_{\varepsilon}^{\frac{1}{\varepsilon}} u$}
     \State{$u|_{X_{l}}(x,t) = u|_{X_{l}}(x,0)$}
    \EndFor
 \end{algorithmic}
  \caption{Update Label Propagation function $u(x,t)$}
  \label{Alg 1}
  \end{algorithm}
  
\subsubsection{The shortcomings of classical diffusion map models }
    Although the probability transition matrix can approximate the heat kernel operator, it may not correctly capture the feature of label propagation process on the dataset.

    The diffusion process on the helix-shaped dataset is illustrated in {\bf Figure \ref{fig:spiral2},\ref{fig:spiral3},\ref{fig:spiral6}} .
   \begin{figure}[H]
    \centering
      \centering
      \subfloat[Second step]
      {\includegraphics[width=0.3\textwidth]{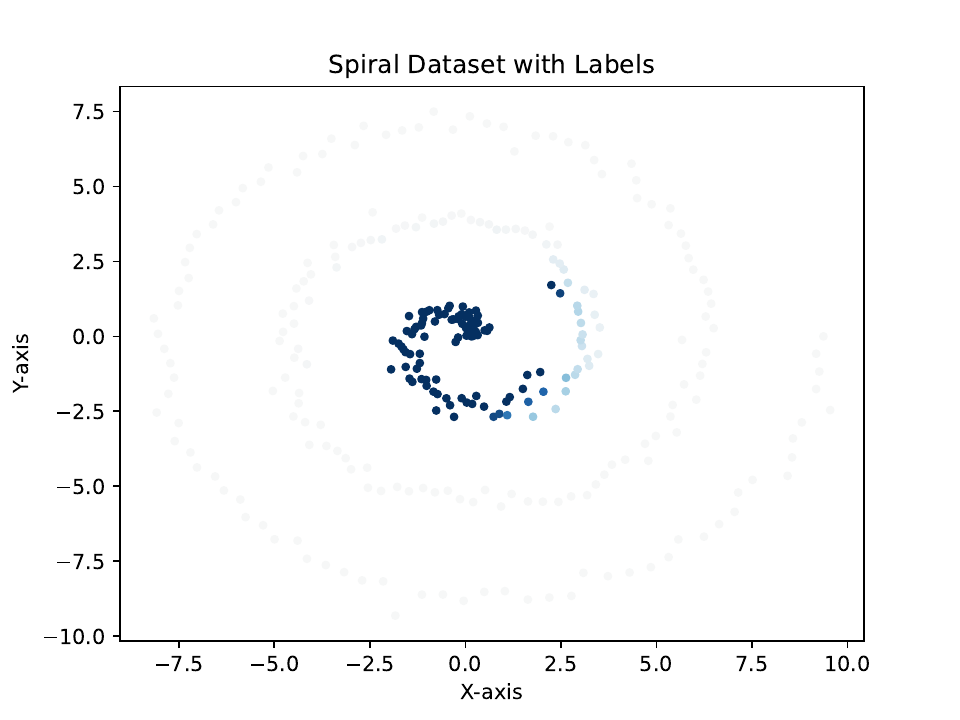}\label{fig:spiral2}} 
      \subfloat[Third step]
      {\includegraphics[width=0.3\textwidth]{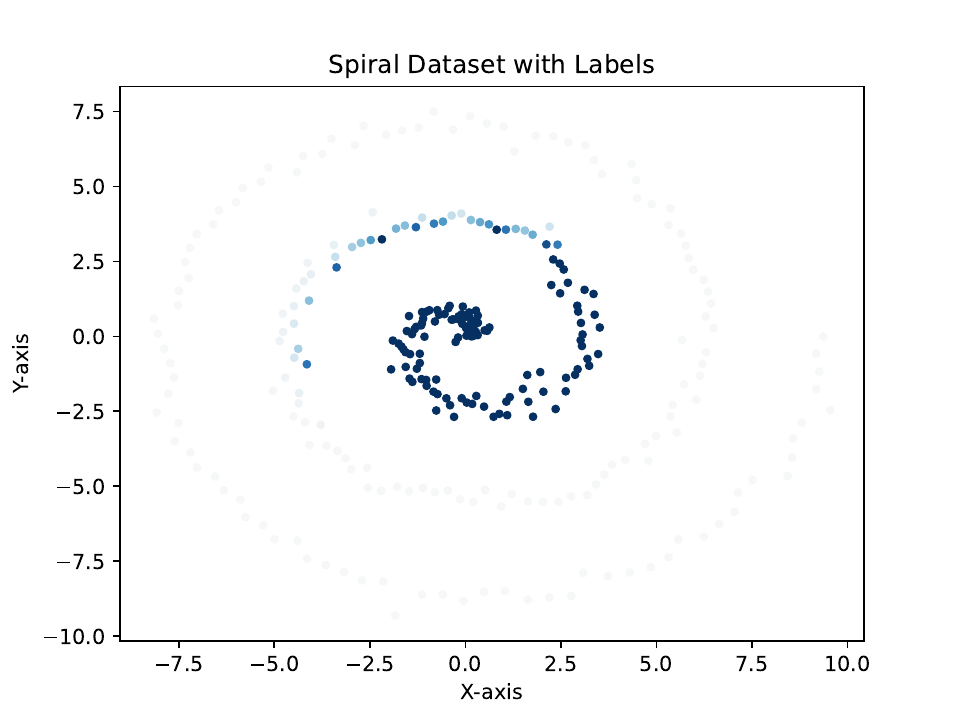}\label{fig:spiral3}}
      \subfloat[6th step]
      {\includegraphics[width=0.3\textwidth]{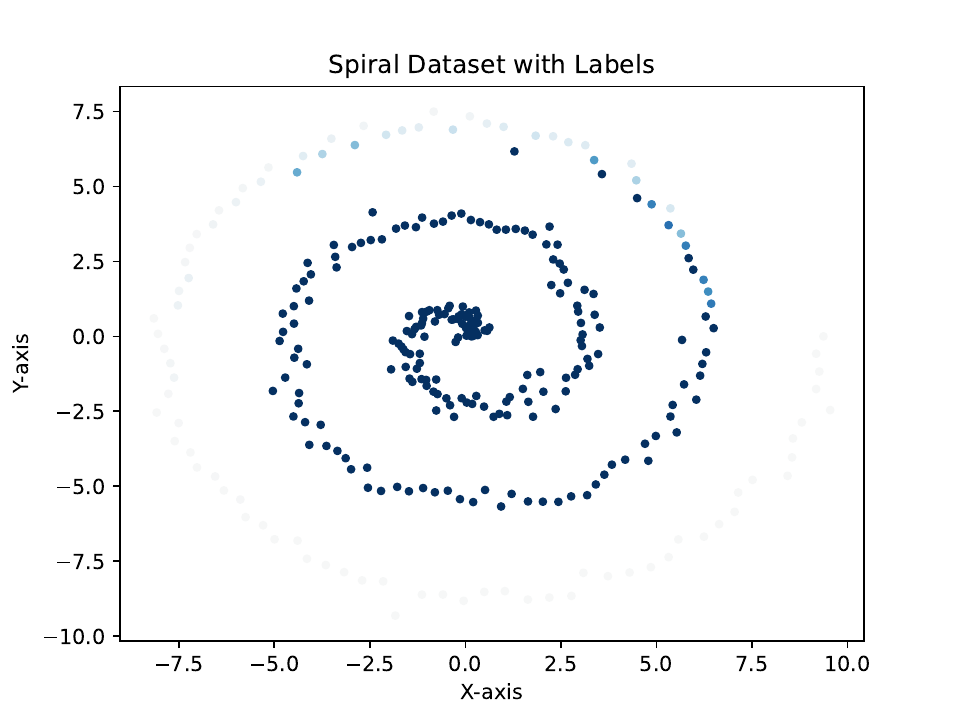}\label{fig:spiral6}}
    \caption{Diffusion process on a helix-shaped dataset formed by 300 points}
  \end{figure}
  The central data point is labeled as 1, while the rest are labeled as 0. It can be observed that when the transition matrix is generated from an adjacency matrix derived from Euclidean distances, the labels spread outward in a circular pattern. The darker colors indicate labels closer to 1.
  Upon observing in the counterclockwise direction, some data points have already been colored in dark blue, while preceding points remain light blue. This suggests that the diffusion process does not occur on the sub-manifold where the dataset lies but rather takes place in the ambient space of the sub-manifold. Despite its high computational efficiency, this algorithm may not perform well in certain classification tasks because it does not consider the intrinsic geometric structure within the sub-manifold. 

  As is illustrated in {\bf Figure \ref{fig:subfig7},\ref{fig:subfig8},\ref{fig:subfig9}}.
  \begin{figure}[htbp]
    \centering
      \centering
      \subfloat[First Step]
      {\includegraphics[width=0.3\textwidth]{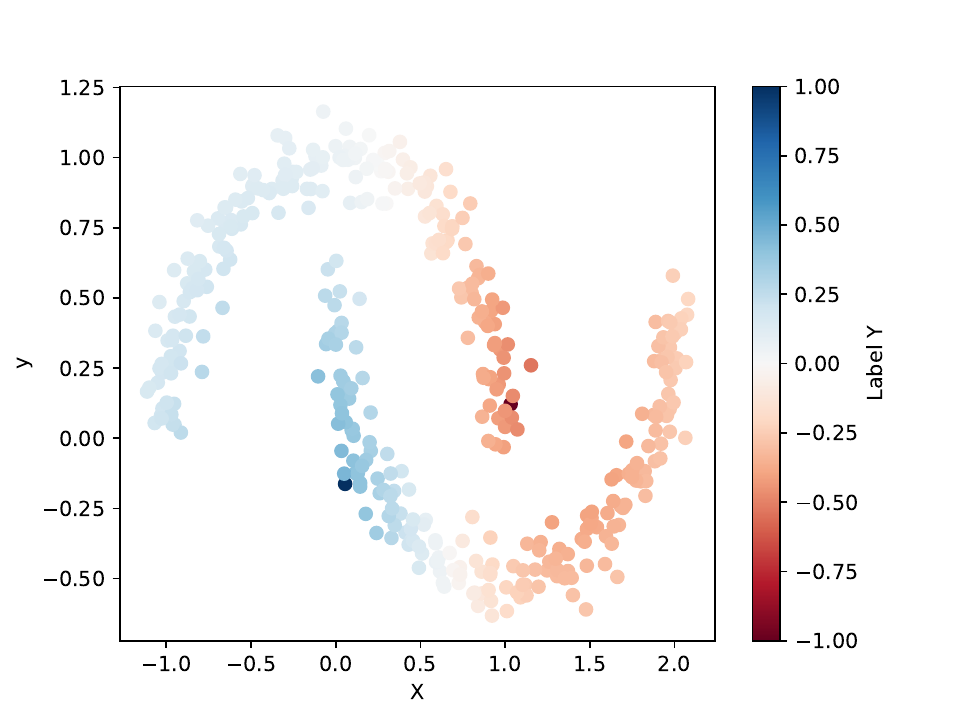}\label{fig:subfig7}}    
      \subfloat[Second Step]
      {\includegraphics[width=0.3\textwidth]{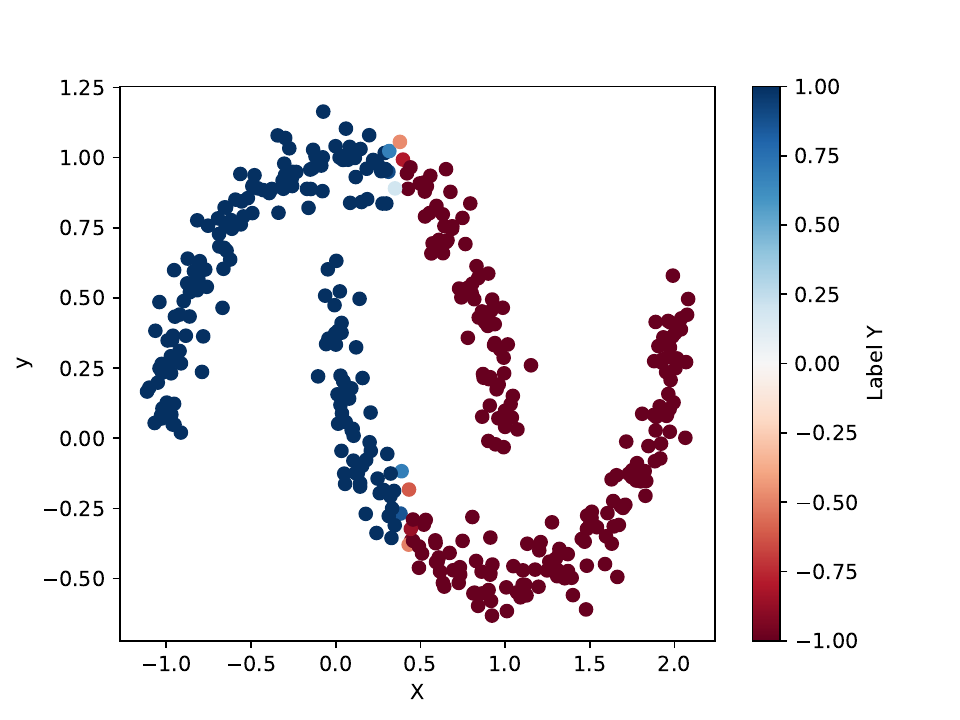}\label{fig:subfig8}}
      \subfloat[Third Step]
      {\includegraphics[width=0.3\textwidth]{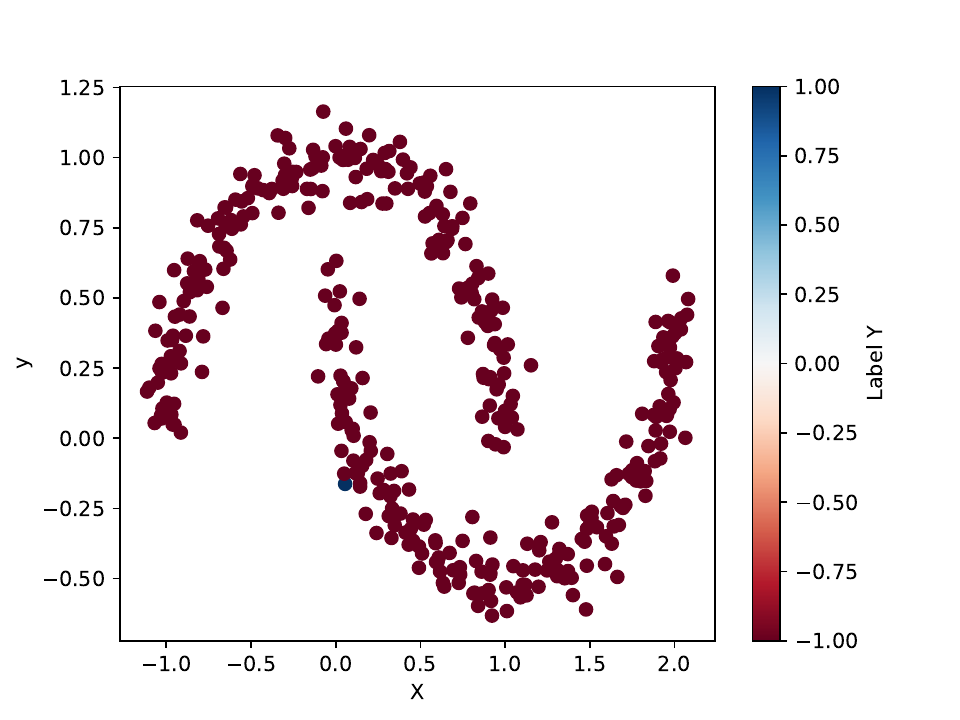}\label{fig:subfig9}}
    \caption{The label propogation process based on transition matrix constructed by Euclidean distance}
  \end{figure}
  Here we generated a two-moons shaped set $X$ consisting of $400$ points of data in the plane. The red-colored data points were labeled by $-1$, and the blue-colored data points were labeled by $+1$. Initially, we randomly chose one data point on each side of this dataset as the  elements of labeled data $X_{l}$, and other elements of $X$ belonged to the unlabeled data. These figures illustrate the first three steps  of diffusion. The labels diffuse outward from the labeled data points. The red labels are concentrated around the center of the dataset, influencing both sides simultaneously. On the other hand, the blue data points, located at the dataset's edge, can only diffuse upward. Ultimately, due to the broader influence of the red data points, all data points are labeled as red, which does not achieve the desired classification outcome. Therefore, it is necessary to improve the construction method of the transition matrix, which determines the diffusion of data points. 

   \subsubsection{Improved transition matrix based on geodesic distance}

  A concise idea is to replace Euclidean distance with geodesic distance, which considers the difference between distances on the manifold and straight-line distances between two points. However, constructing an adjacency matrix based on geodesic distance may introduce high computational complexity. Fortunately, it is not necessary to compute the geodesic distance for every pair of points. Depending on the characteristics of different datasets, we only need to establish the geodesic distances between each data point and its k-nearest neighbors. In Appendix A {\bf theorem\ref{theorem}}, we will illustrate that this approach achieves the same effect as the classical model in calculating the geodesic distance between every pair of points to build the transition matrix. 

  The geodesic distance is the length of the shortest curve connecting two points, manifested as the shortest distance between two points on a finite graph(See \cite{li2019geodesic}).Numerically, we can employ the Floyd algorithm to achieve this process. The Floyd–Warshall algorithm, proposed by Floyd and Warshall, is an algorithm with $\mathcal{O}(|V|^3)$ complexity that can rapidly compute the approximate shortest distances of k-nearest neighbors for data points. The specific implementation process is detailed in Algorithm 2(See \cite{Floyd}).

  \begin{algorithm}[htbp]
    \begin{algorithmic}
        \State{Input the adjacency matrix $G$, the number of nearest neibourhoods $k$}
    \State{Initialise the distance matrix: $\mathrm{Dist}(i,j) = +\infty$}
    \For{$i$ in $\mathrm{Vertex}$}
        \State{$\mathrm{Dist}(i,i)=0$}
        \For{$j$ in $\mathrm{Vertex}$}
          \If{$(i,j)$ in $\mathrm{Edge}$}
            \State{$\mathrm{Dist}(i,j)=G(i,j)$}
          \EndIf
        \EndFor
      \EndFor
    \For{$i$ in $\{1,\cdots,k\}$}
      \For{$u$ in $\mathrm{Vertex}$}
        \For{$v$ in $\mathrm{Vertex}$}
          \If{$(u,v)$ in $\mathrm{Edge}$}
            \If{$\mathrm{Dist}(u,i)+\mathrm{Dist}(i,v)<\mathrm{Dist}(u,v)$}
              \State{$\mathrm{Dist}(u,v)=\mathrm{Dist}(u,i)+\mathrm{Dist}(i,v)$}
            \EndIf
          \EndIf
        \EndFor
      \EndFor
    \EndFor
    \end{algorithmic}
    \caption{Floyd-Warshall}
    \label{Alg 2}
    \end{algorithm}
    The algorithm in lines 11 to 21 involves nested triple-loop structures, significantly reducing computational efficiency. We consider replacing the element-wise iteration process with matrix operations. Since matrix operations in Python are parallelized, this allows the algorithm to handle large-scale datasets efficiently.

    \begin{algorithm}[htbp]
     \begin{algorithmic}
          \State{Input the adjacency matrix $G$, the number of nearest neibourhoods $k$}
      \State{Initialise the distance matrix: $\mathrm{Dist}(i,j) = +\infty$}
      \For{$i$ in $\mathrm{Vertex}$}
          \State{$\mathrm{Dist}(i,i)=0$}
          \For{$j$ in $\mathrm{Vertex}$}
            \If{$(i,j)$ in $\mathrm{Edge}$}
              \State{$\mathrm{Dist}(i,j)=G(i,j)$}
            \EndIf
          \EndFor
        \EndFor
      \For{$i$ in $\{1,\cdots,k\}$}
        \State{$\mathrm{Dist} = \min\{\mathrm{Dist},\mathbf{1}\otimes \mathrm{Dist}(:,i)+\mathrm{Dist}(i,:)\otimes \mathbf{1}\}$}
      \EndFor
     \end{algorithmic}
      \caption{Our Improved Floyd-Warshall}
      \label{Alg 3}
      \end{algorithm}

      By inputting the geodesic distances constructed based on the Floyd algorithm into the Diffusion Map algorithm, a new transition matrix can be obtained. We will first study the performance of the improved algorithm on the helix-shaped dataset.
      \begin{figure}[htbp]
        \centering
          \centering
          \subfloat[27th step]
          {\includegraphics[width=0.3\textwidth]{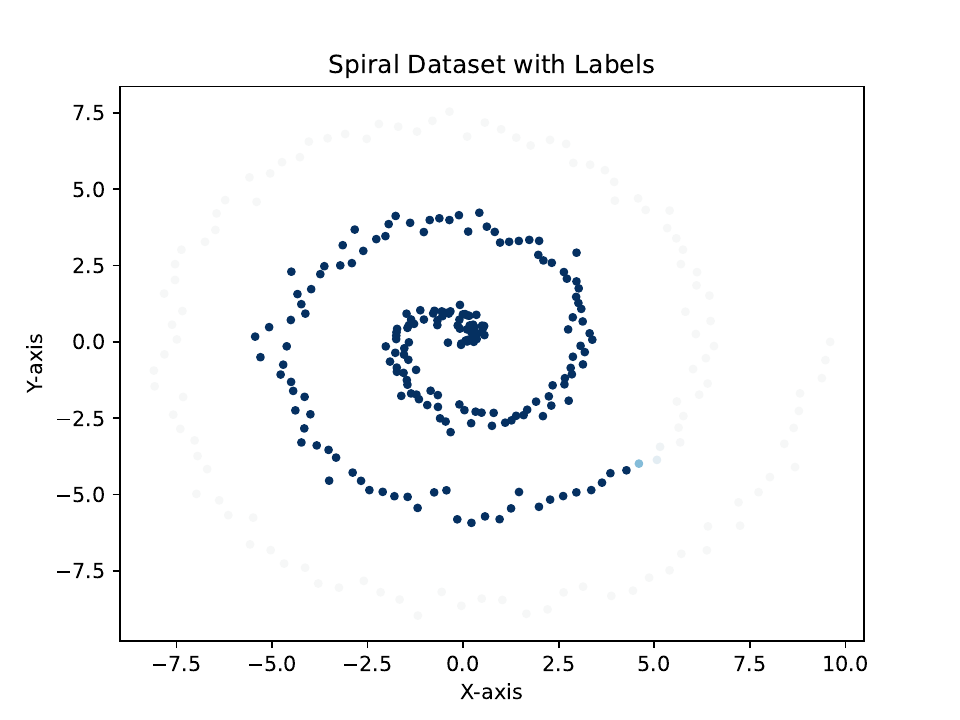}\label{fig:SpiralFloyd27}}    
          \subfloat[46th Step]
          {\includegraphics[width=0.3\textwidth]{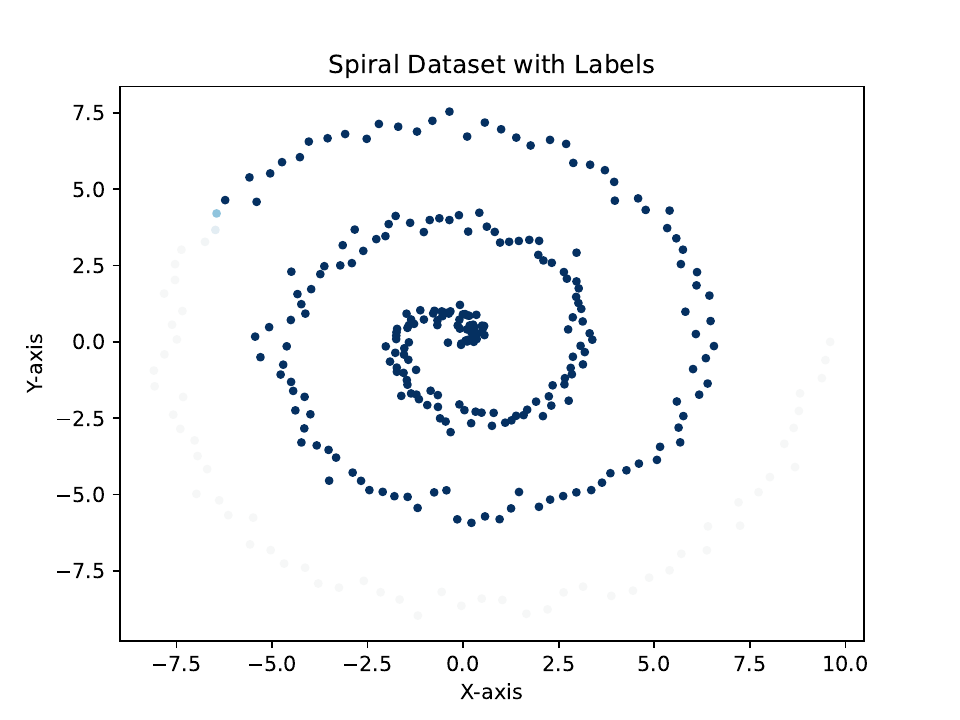}\label{fig:SpiralFloyd46}}
          \subfloat[69th Step]
          {\includegraphics[width=0.3\textwidth]{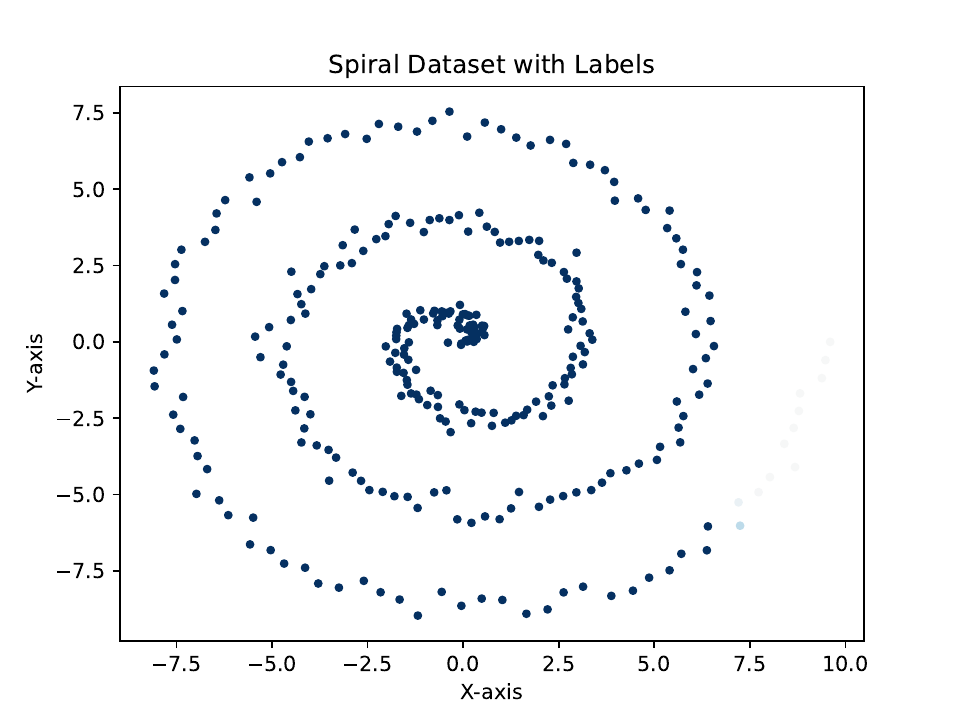}\label{fig:SpiralFloyd69}}
        \caption{The label propagation process based on transition matrix constructed by geodesic distance}
      \end{figure}
      {\bf Figures \ref{fig:SpiralFloyd27}, \ref{fig:SpiralFloyd46}, \ref{fig:SpiralFloyd69}} illustrates the diffusion process at steps 27, 46, and 69 determined by the improved transition matrix under the same conditions. It can be observed that although the diffusion speed has significantly slowed down, taking 69 diffusion steps to label all data points, the color intensity of the labeled data changes continuously. This indicates that the diffusion process occurs counterclockwise in the helix rather than the ambient space, effectively reflecting the manifold's geometric structure. 

      The improved algorithm also exhibits superior performance in classification tasks. As shown in {\bf Figure \ref{fig:TwomoonFloyd1}, \ref{fig:TwomoonFloyd2}, \ref{fig:TwomoonFloyd3}} under the same conditions, with the influence of the improved transition matrix, all points above the dataset are colored red, while those below are all blue. This indicates that, after a certain period, the values of the label propagation function on both sides of the dataset have opposite signs, effectively distinguishing the dataset into two distinct regions.

      \begin{figure}[htbp]
        \centering
          \centering
          \subfloat[First Step]
          {\includegraphics[width=0.3\textwidth]{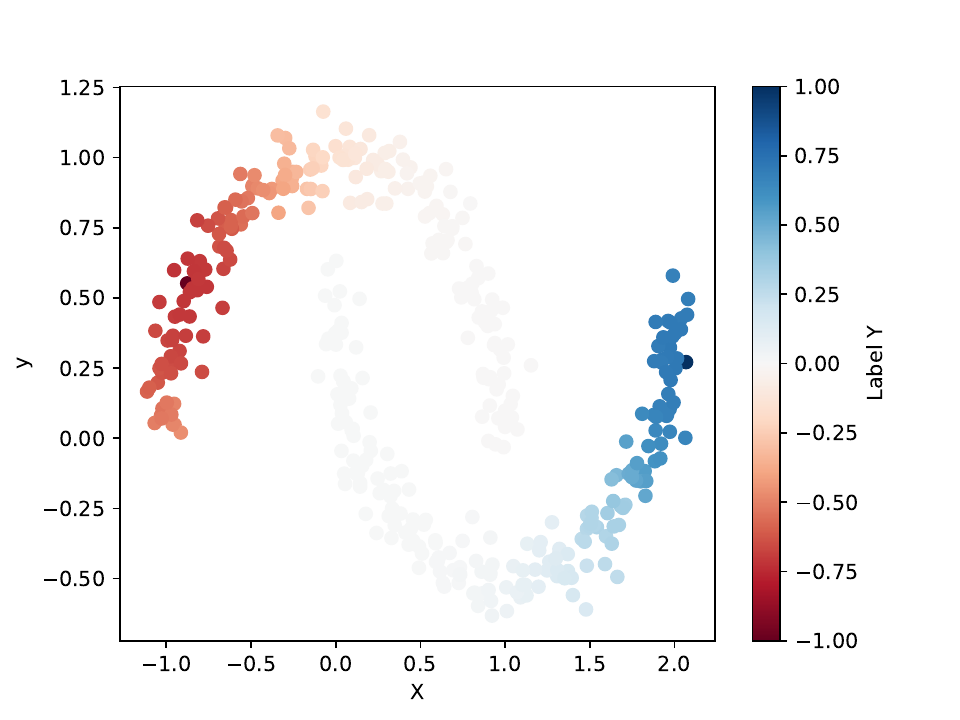}\label{fig:TwomoonFloyd1}}    
          \subfloat[Second Step]
          {\includegraphics[width=0.3\textwidth]{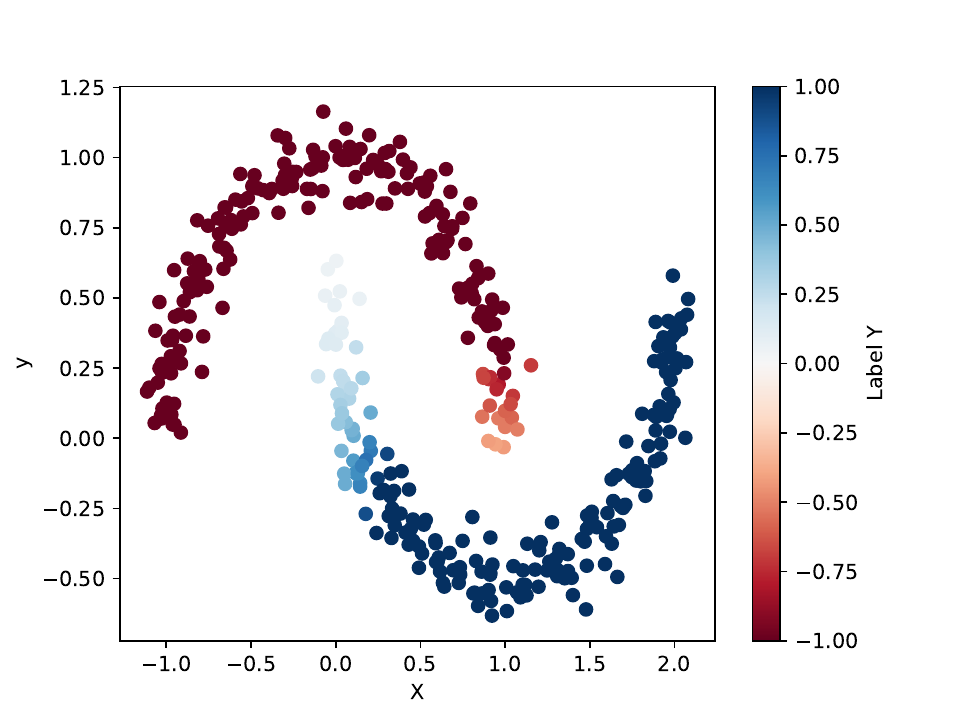}\label{fig:TwomoonFloyd2}}
          \subfloat[Third Step]
          {\includegraphics[width=0.3\textwidth]{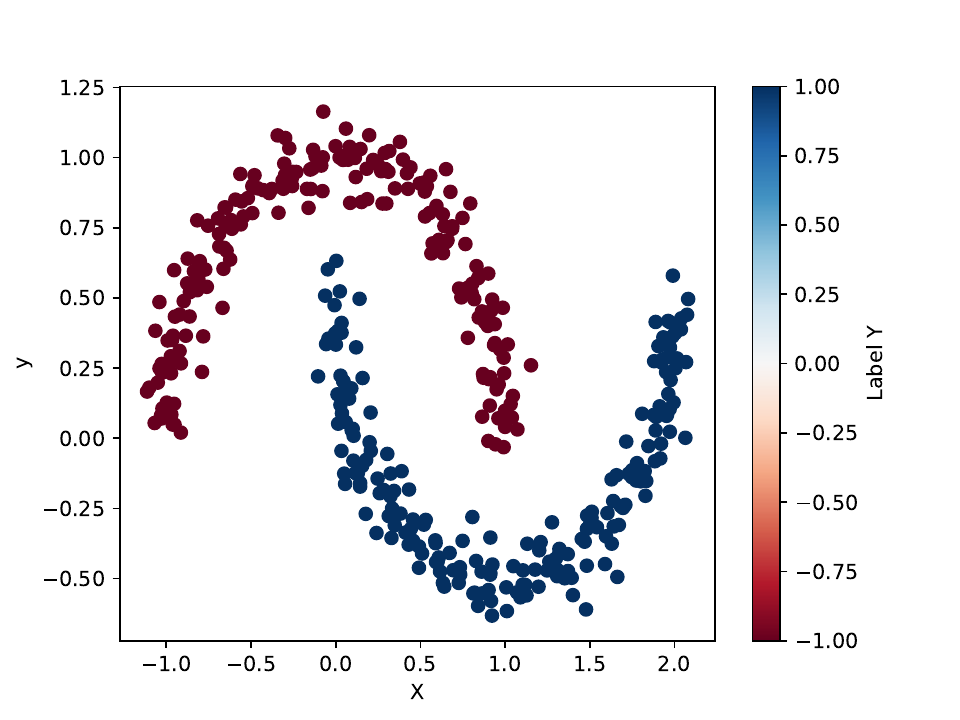}\label{fig:TwomoonFloyd3}}
        \caption{The label propogation process based on transition matrix constructed by geodesic distance}
      \end{figure}

\subsection{Neumann Heat Kernel(NHK) manifold regularization model}
Inspired by the heat diffusion process, we have constructed a label propagation model and simulated the label propagation process on the dataset using the improved diffusion mapping algorithm. Experiments show that this effectively increased the number of labeled data points on the manifold. However, in reality, the diffusion process should not be limited to the dataset alone but should occur in the entire space. After enough time, the label distribution in the entire space will reach a stable state, with some data labeled as positive and others as negative. Due to the smoothness of the label propagation function, we can inevitably select a series of labels as 0, forming a classification plane in space. In this sense, the manifold regularization classification model is essentially a prediction of the label distribution function when it reaches a steady state. Based on this idea, we will construct a penalty that measures the stability of the label propagation process after a sufficiently long time. We sum up the above discussion and point out the following theorem:

\begin{theorem}
  $\mathcal{M}$ is a $d$ dimensional compact sub-manifold in $\mathbb{R}^{n}$, when $\mathrm{Ric}(\mathcal{M})\geqslant 0$, that is, for any $w\in T\mathcal{M}$,
  $\mathrm{Ric}(w,w)\geqslant 0$. Then there exists function $f(x)$ such that 
  $$\lim_{t \to \infty}u(x,t) = f(x)$$
\end{theorem}
This theorem indicates that the distribution function of the label exists after a sufficiently large time, and the proof of it can be seen in Appendix B.

Let $f(x)$ be a label propagation function after a sufficiently large time. As this function already represents the label distribution when label propagation reaches a steady state, we apply the operator $e^{t\Delta}$ to the function, examining the diffusion process for another period. We expect the distance between $e^{t\Delta}f$ and $f$ to be sufficiently small. Naturally, we consider adding a penalty term to the distance between $e^{t\Delta}f$ and $f$ Thus, we construct the following penalty:
$$\mathcal{P}(f) =  \gamma_{I}\left\lVert f-e^{t\Delta}f \right\rVert _{\mathcal{H}(\mathcal{M})}^2=\gamma_{I}\int_{\mathcal{M}}(f-e^{t\Delta}f)^2\,d\mu  $$
 Based on the above discussion, we present the  manifold regularization model based on Neumann Heat Kernel(NHK):
 \begin{gather}
  f^{*}=\arg\min_{f\in\mathcal{H}_{K}}\frac{1}{l}\sum_{i=1}^{l}V(x_{i},y_{i},f) + \gamma_{A}\left\lVert f \right\rVert _{K}^2 + \gamma_{I}\int_{\mathcal{M}}(f-e^{t\Delta}f)^2\,d\mu 
 \end{gather}
 To show that this penalty can lead to a optimizer which expression satisfies the representation theorem, it suffices to show $I-e^{t\Delta}$ is bounded, because 
 $$\mathcal{P}(f)=\gamma_{I}\left\langle (I-e^{t\Delta})f,(I-e^{t\Delta})f \right\rangle _{L^2}=\left\langle f,(I-e^{t\Delta})^{*}(I-e^{t\Delta})f \right\rangle _{L^2}  $$
. Since $\left\lVert A^{*}A \right\rVert =\left\lVert A \right\rVert ^2 $(See \cite{yosida2012functional}), we obtained that
$$\left\lVert (I-e^{t\Delta})^{*}(I-e^{t\Delta}) \right\rVert =\left\lVert I-e^{t\Delta} \right\rVert   ^2$$

 \begin{corollary}
  $I-e^{t\Delta}$ is a bounded operator
 \end{corollary}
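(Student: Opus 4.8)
The plan is to reduce boundedness of $I-e^{t\Delta}$ to boundedness of the heat semigroup $e^{t\Delta}$ itself, and to extract the latter from the spectral theory of the Laplace--Beltrami operator on the compact manifold $\mathcal{M}$. Since the identity $I$ is bounded with $\lVert I\rVert=1$, and bounded operators form a normed vector space, once $e^{t\Delta}$ is known to be bounded the triangle inequality gives $\lVert I-e^{t\Delta}\rVert\leqslant\lVert I\rVert+\lVert e^{t\Delta}\rVert<\infty$, which already settles the corollary.

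To make $e^{t\Delta}$ bounded I would first invoke the classical fact that on a compact $\mathcal{M}$ the operator $-\Delta$ (with the Neumann conditions relevant here when $\mathcal{M}$ has boundary) is a non-negative self-adjoint operator possessing a discrete spectrum $0=\lambda_{0}\leqslant\lambda_{1}\leqslant\cdots\to\infty$ and an orthonormal eigenbasis $\{\phi_{k}\}$ of $L^2(\mathcal{M})$; the non-negativity is precisely the identity $-\int_{\mathcal{M}}f\Delta f\,d\mu=\int_{\mathcal{M}}\lVert\nabla f\rVert^2\,d\mu\geqslant 0$ already recorded above, so that $\Delta\phi_{k}=-\lambda_{k}\phi_{k}$ with $\lambda_{k}\geqslant 0$. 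The functional calculus then acts on $f=\sum_{k}c_{k}\phi_{k}$ by $e^{t\Delta}f=\sum_{k}e^{-t\lambda_{k}}c_{k}\phi_{k}$, and since $0<e^{-t\lambda_{k}}\leqslant 1$ for all $t\geqslant 0$, Parseval's identity yields $\lVert e^{t\Delta}f\rVert^2=\sum_{k}e^{-2t\lambda_{k}}|c_{k}|^2\leqslant\lVert f\rVert^2$, so $e^{t\Delta}$ is a contraction. Sharpening the estimate, the eigenvalues of $I-e^{t\Delta}$ are $1-e^{-t\lambda_{k}}\in[0,1)$, so in fact $\lVert I-e^{t\Delta}\rVert=\sup_{k}(1-e^{-t\lambda_{k}})\leqslant 1$, which is both bounded and a cleaner constant than the triangle-inequality bound.

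I expect the only genuine obstacle to be justifying this spectral picture --- that $\Delta$ is (essentially) self-adjoint with non-negative, discrete spectrum and a complete eigenbasis; on a compact manifold this is standard elliptic theory, and it is the single place where compactness enters. Reassuringly, boundedness of $e^{t\Delta}$ alone needs only that $\sigma(\Delta)\subseteq(-\infty,0]$, which is immediate from the non-negativity identity and requires neither discreteness nor completeness, so even a reader unwilling to grant the full spectral decomposition still obtains the corollary. The remaining steps --- the one-line Parseval contraction estimate and the passage from $\lVert e^{t\Delta}\rVert\leqslant 1$ to boundedness of $I-e^{t\Delta}$ --- are entirely routine.
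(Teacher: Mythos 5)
Your proposal is correct and follows essentially the same route as the paper: both reduce the corollary to the contraction property $\lVert e^{t\Delta}\rVert_{L^{2}(\mathcal{M})}\leqslant 1$ of the heat semigroup, from which boundedness of $I-e^{t\Delta}$ is immediate. The only difference is that the paper simply cites this fact (Davies, \emph{Heat Kernels and Spectral Theory}), whereas you derive it---together with the slightly sharper bound $\lVert I-e^{t\Delta}\rVert\leqslant 1$---from the spectral decomposition of the Laplacian on the compact manifold, which is precisely the standard proof of the cited result.
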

In fact, it is a direct corollary of that $\left\lVert e^{t\Delta} \right\rVert \leqslant 1 $  Under the $L^2(\mathcal{M})$ norm(See \cite{contractionsemigroup}).

 We have defined the label propagation function in 2.1 and studied the diffusion process on the dataset using it. Through the above discussion, we do not need to determine the diffusion behavior of the extended propagation function on the data manifold. Instead, we only need to use the manifold regularization model to obtain the distribution of the propagation function on the manifold when it reaches a steady state, thereby achieving the classification effect. We use the improved probability transition matrix to estimate the heat kernel operator. Due to the fitness of the dataset, we employ Monte Carlo methods to estimate integrals. Therefore, the discrete form of the manifold regularization model can be expressed as follows:
 \begin{gather}
  \mathbf{f}^{*}=\arg\min_{\mathbf{f}\in\mathbb{R}^{l+u}}\frac{1}{l}\sum_{i=1}^{l}V(x_{i},y_{i},\mathbf{f}) + \gamma_{A}\left\lVert f \right\rVert _{K}^2  + \frac{\gamma_{I}}{(l+u)^2}\mathbf{f}^{T}(I-P_{G,\varepsilon}^{\frac{t}{\varepsilon}})^{T}(I-P_{G,\varepsilon}^{\frac{t}{\varepsilon}})\mathbf{f}
 \end{gather}
 
\subsection{Neumann Heat Kernel Regularized Least Squares(NHKRLS)}
Now, we illustrate the performance of the improved model using the least squares method within the manifold regularization framework as an example. The loss function is 
$$\mathbf{f}^{*}=\arg\min_{\mathbf{f}\in\mathbb{R}^{l+u}}\frac{1}{l}\sum_{i=1}^{l}(f(x_{i})-y_{i})^2 + \gamma_{A}\left\lVert f \right\rVert _{K}^2  + \frac{\gamma_{I}}{(l+u)^2}\mathbf{f}^{T}(I-P_{G,\varepsilon}^{\frac{t}{\varepsilon}})^{T}(I-P_{G,\varepsilon}^{\frac{t}{\varepsilon}})\mathbf{f}$$
Here we choose the regularized norm as $2$-norm, and we can use representation theorem to show the analytical solution of this optimization problem. Suppose that 
$$f^{*} = \sum_{i=1}^{l+u}{\alpha_{i}K(x_{i},\cdot)}$$
and $\alpha = (\alpha_{1},\cdots,\alpha_{l+u})$, $Y=(y_{1},\cdots,y_{l},0,\cdots 0)$.\\ $J$ is a $(l+u)\times(l+u)$diagonal matrix: $J=\begin{pmatrix}
  I_{l}& \\
  & O_{u}
\end{pmatrix}$, the loss function is 
$$loss(\alpha) = \frac{1}{l}(Y-JK\alpha)^{T}(Y-JK\alpha) + \alpha^{T}K\alpha + \alpha^{T}K^{T}(I-P_{G,\varepsilon}^{\frac{t}{\varepsilon}})^{T}(I-P_{G,\varepsilon}^{\frac{t}{\varepsilon}})K\alpha$$
which leads to the following solution:
$$\alpha^{*} = (JK + \gamma_{A}lI +\frac{\gamma_{I}l}{(l+u)^2}(I-P_{G,\varepsilon}^{\frac{t}{\varepsilon}})^{T}(I-P_{G,\varepsilon}^{\frac{t}{\varepsilon}})K)^{-1}Y$$
We can organize the above discussion into pseudo-code as follow {\bf Algorithm \ref{Alg:NHKRLS}}:
\begin{algorithm}[htbp]
  \begin{algorithmic}
      \State{Input $X_{l}$, $X_{u}$, $Y$ diffusion steps $t$, parameters: $\gamma_{A}$, $\gamma_{I}$, $\varepsilon$}
  \State{Compute the kernel function $K$}
  \State{Compute the transition matrix $P$ according to algorithm1}
  \For{k = 0 $\cdots$ t}
       \State{$u = P_{\varepsilon}^{\frac{1}{\varepsilon}} u$}
       \State{$u|_{X_{l}}(x,t) = u|_{X_{l}}(x,0)$}
      \EndFor
  \State{Recompute the labeled and unlabeled datasets: $X_{l}$, $X_{u}$, labels $Y$}
  \State{Compute $\alpha^{*}$ by $$(JK + \gamma_{A}lI +\frac{\gamma_{I}l}{(l+u)^2}(I-P_{G,\varepsilon}^{\frac{t}{\varepsilon}})^{T}(I-P_{G,\varepsilon}^{\frac{t}{\varepsilon}})K)^{-1}Y$$}
  \State{Output classifier $f^{*}(x) = \sum_{i=1}^{l+u}\alpha_{i}^{*}K(x_{i},x)$}
  \end{algorithmic}
  \caption{Neumann Heat Kernel Regularized Least Squares}
  \label{Alg:NHKRLS}
  \end{algorithm}

  Lines 1 to 7 update the values of the label propagation function on the dataset after t steps of diffusion. Lines 7 to 10, based on the heat balance condition in space, update the values of the extended label propagation function on the data manifold using the values of the label propagation function on the dataset. Here, $f^{*}$ is essentially $\lim_{t \to \infty}u(t,\cdot)$.

  \section{Numerical Experiments}%
    \subsection{Performance of NHKRLS in generated dataset}
  To demonstrate the performance of the model, we compare the model (NHKRLS) with manifold regularization based on the Laplace operator (LapRLS) on three datasets. As shown in  {\bf Figure:\ref{fig:NHKRLS1}...\ref{fig:LapRLS3}}, we generated double-moon, circular, and two-cluster datasets, each containing 400 sample points. These points are divided into two classes based on their positions, and only one sample point in each class is labeled, shown as diamonds in the figure. Red and blue colors represent different categories. We trained the NHKRLS and LapRLS models separately using the above sample points, with the NHKRLS model considering only one step of diffusion, and the remaining parameters appropriately selected. To evaluate the model performance, we randomly generated 6000 sample points in space and used the trained models to classify these sample points. As shown in the figure, blue and gray represent different output labels in the test set, and the blue-gray boundary is the classification hyperplane on the data manifold.
  \begin{figure}[h]
    \centering
      \centering
      \subfloat[NHKRLS twoMoons]
      {\includegraphics[width=0.3\textwidth]{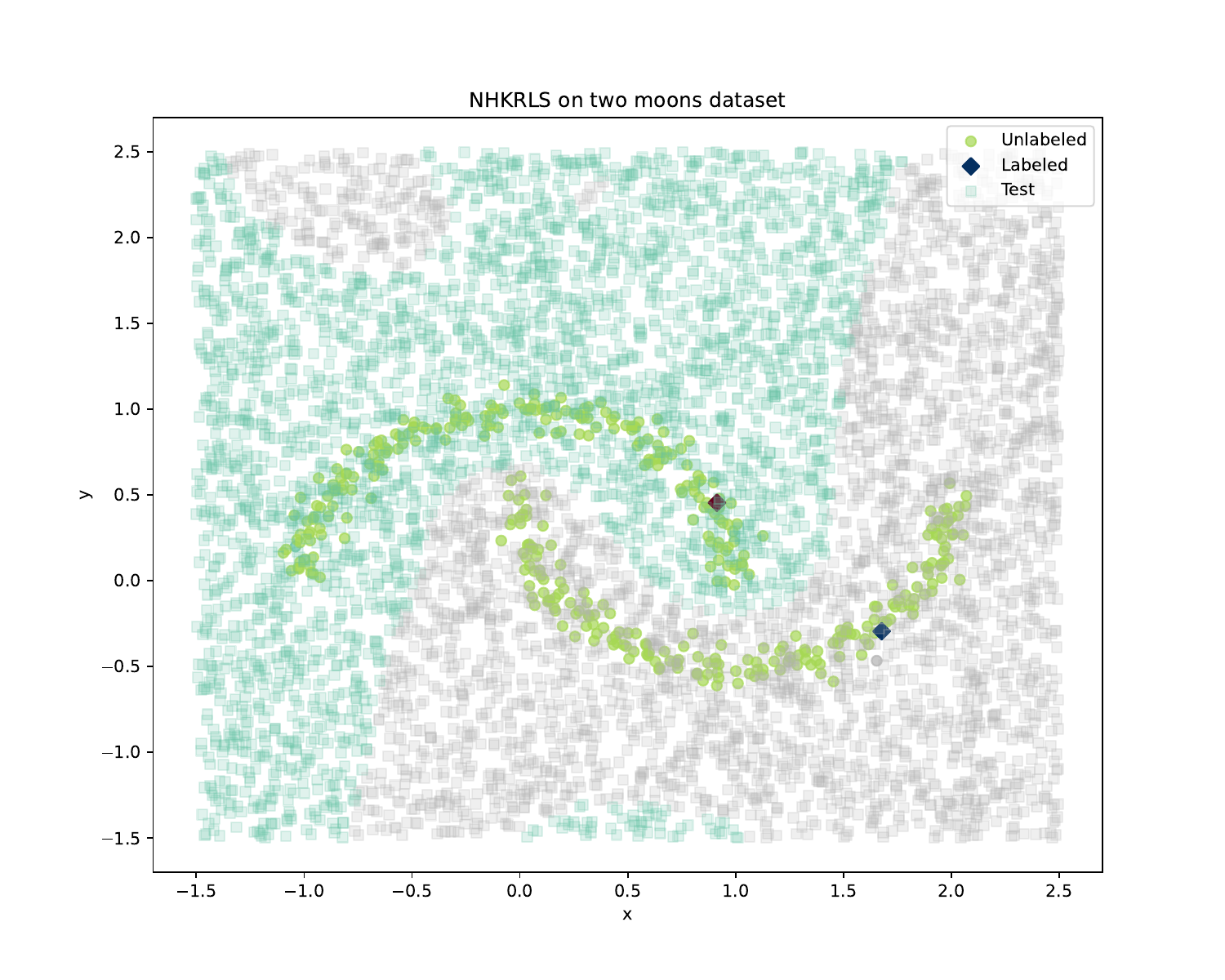}\label{fig:NHKRLS1}}    
      \subfloat[NHKRLS Ring]
      {\includegraphics[width=0.3\textwidth]{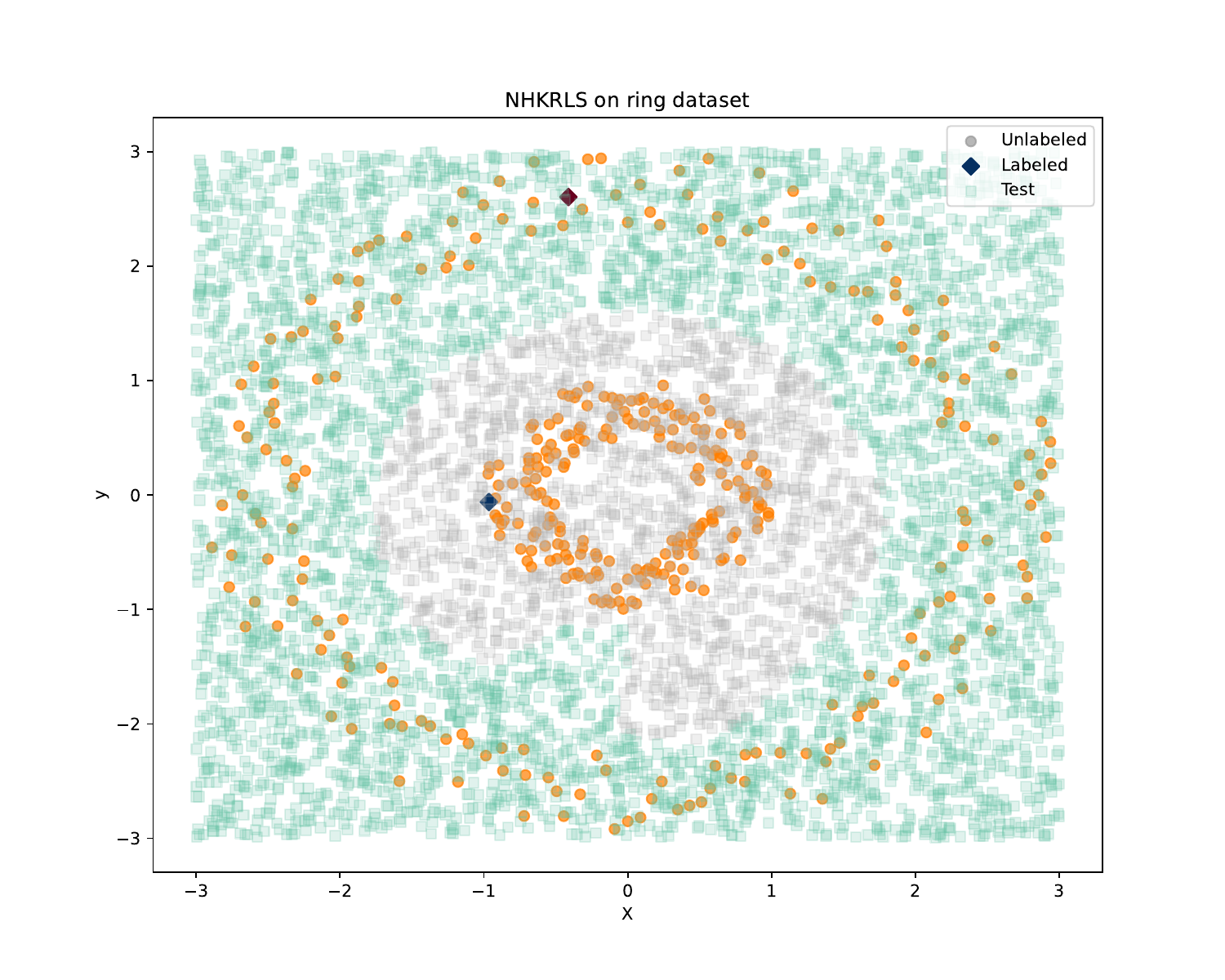}\label{fig:NHKRLS2}}
      \subfloat[NHKRLS twoClusters]
      {\includegraphics[width=0.3\textwidth]{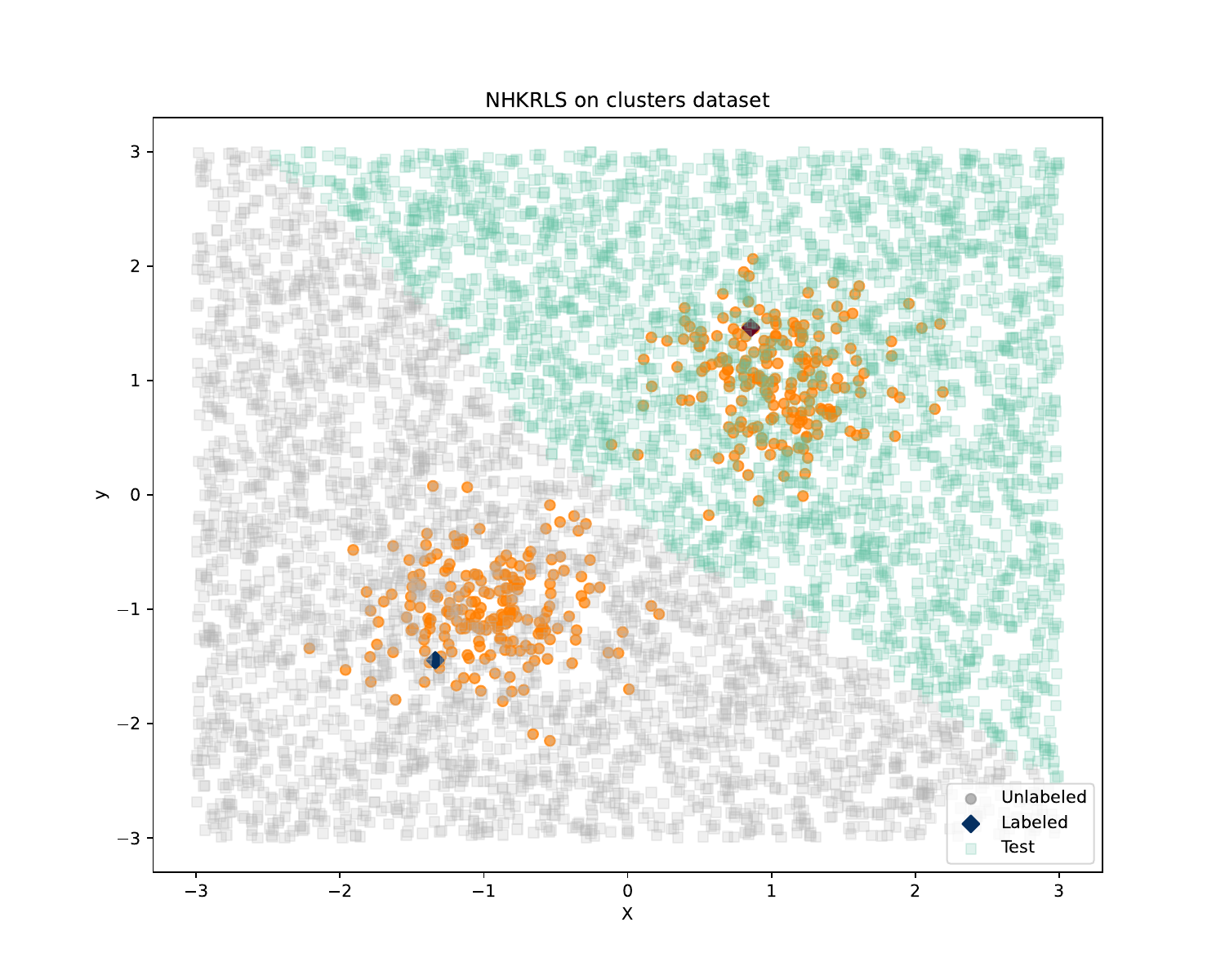}\label{fig:NHKRLS3}}\\
      \subfloat[LapRLS twoMoons]{\includegraphics[width=0.3\textwidth]{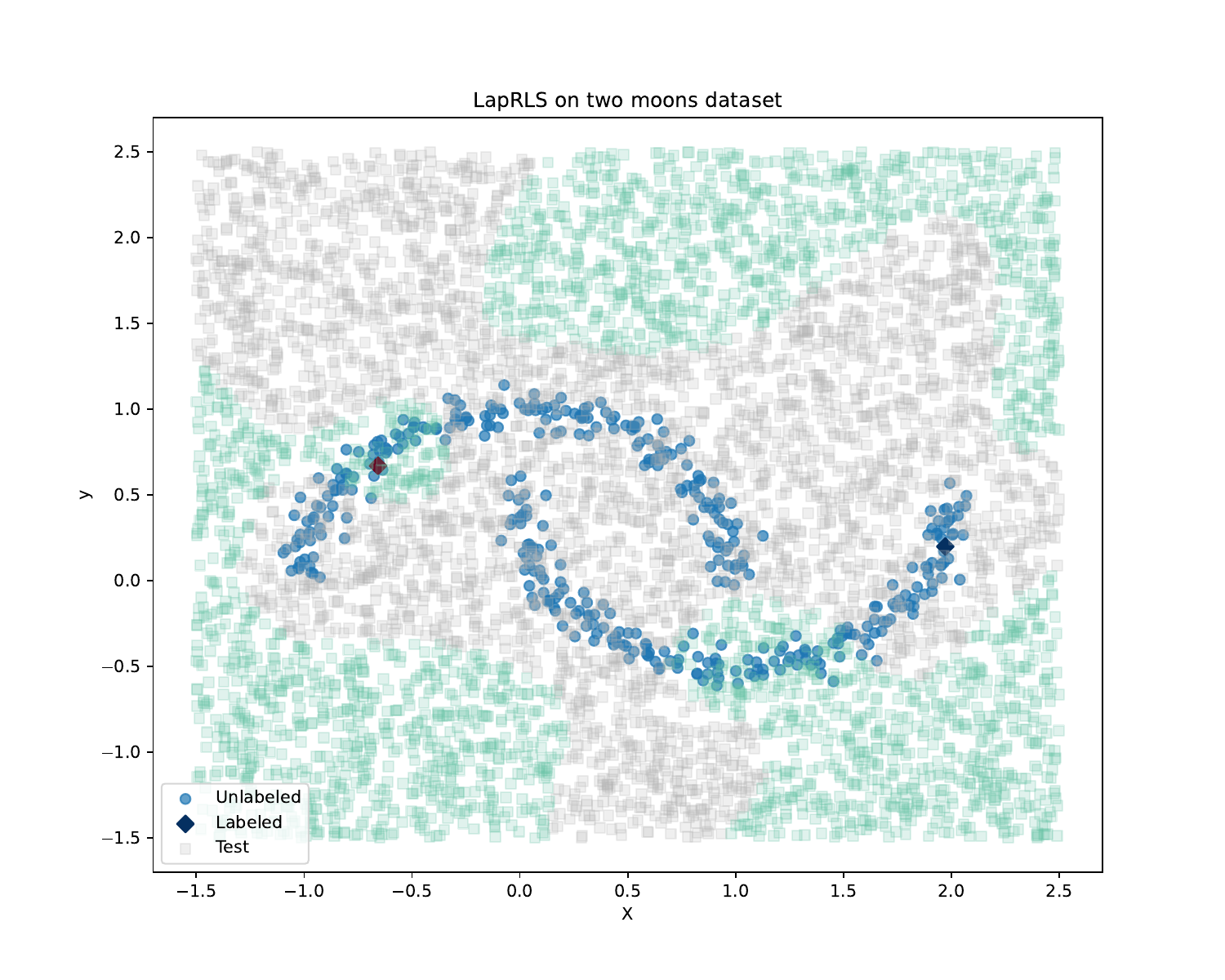}\label{fig:LapRLS1}}
      \subfloat[LapRLS Ring]{\includegraphics[width=0.3\textwidth]{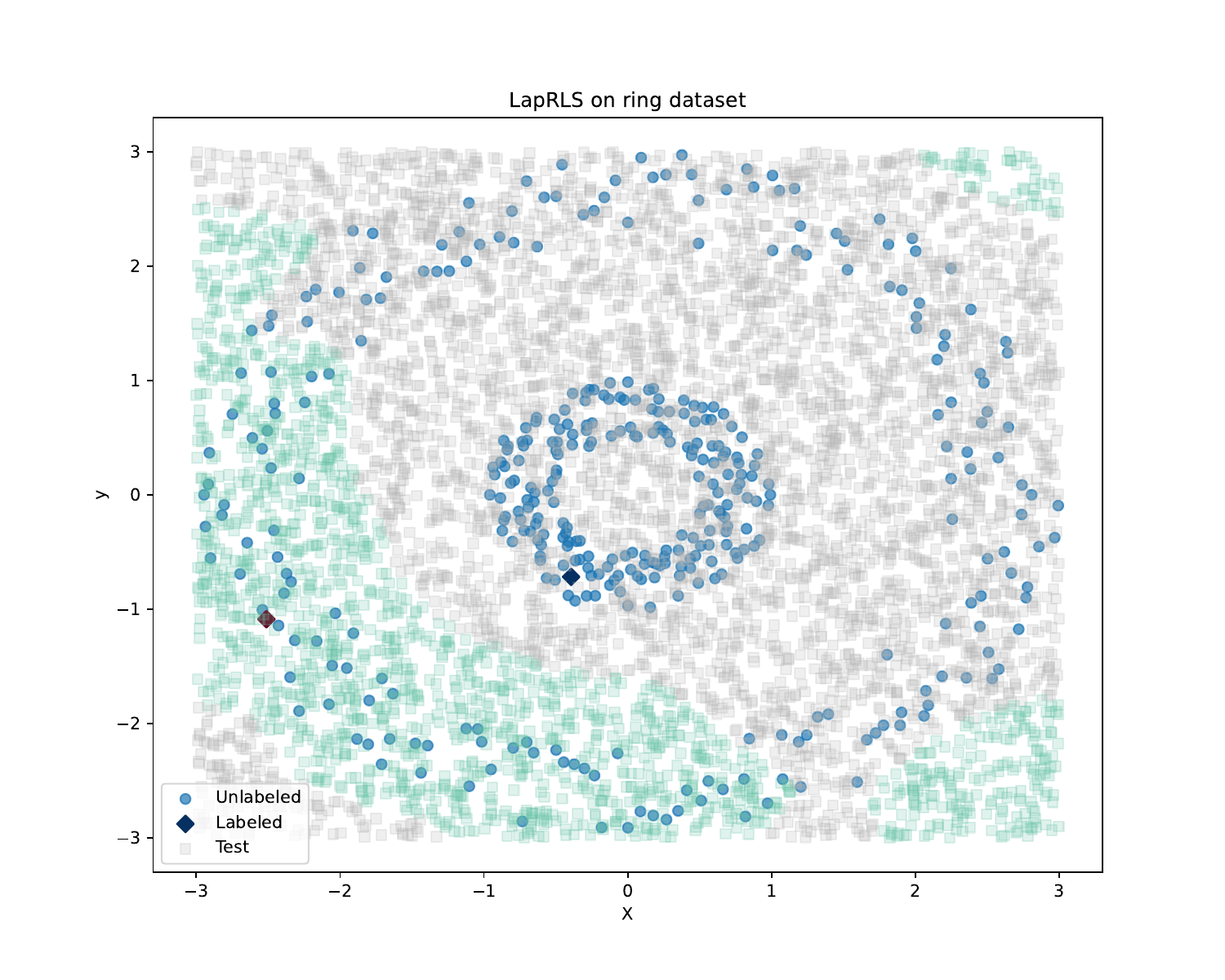}\label{fig:LapRLS2}}
      \subfloat[LapRLS twoClusters]{\includegraphics[width=0.3\textwidth]{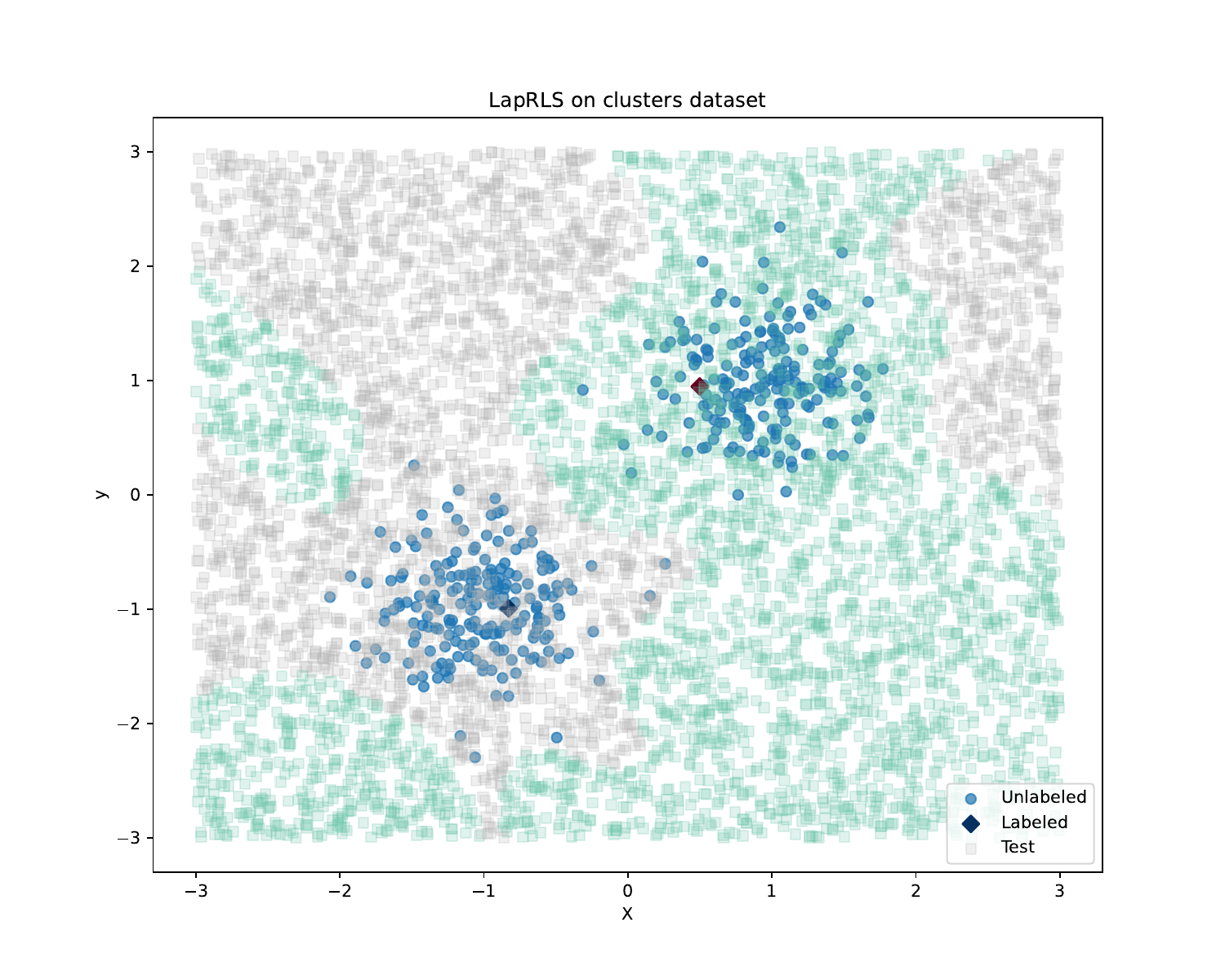}\label{fig:LapRLS3}}
    \caption{The performance of NHKRLS and LapRLS in twoClusters, twomoon and Ring dataset. Labeled dataset is diamond shaped and unlabeled is circular.}
  \end{figure}

  It can be observed that classifiers trained by NHKRLS have relatively smooth classification hyper-planes, and they are at a certain distance from the dataset, distinguishing different classes well. In these three datasets, the first and third datasets are relatively dense, and the second dataset is relatively sparse. The NHKRLS model performs well on all three datasets. In comparison, the classification hyperplane trained by LapRLS exhibits discontinuity (as shown in {\bf Figure \ref{fig:LapRLS1}}) and weak regularity (as shown in {\bf Figure \ref{fig:LapRLS3}}), and the classification performance is inferior to the NHKRLS model. The reason for this phenomenon may be the insufficient number of labeled samples. Although the LapRLS model can correctly classify locally around labeled sample points, its overall performance on the dataset is poor. Training the NHKRLS model involves a diffusion process on the dataset, increasing the number of sample points, and resulting in a better classifier.

  \subsection{NHKRLS on MINIST dataset}
  \subsubsection{Experiment1: Binary classification task}
  In this experiment, we utilized LapRLS, NHKRLS models, and the classical LS model to classify a set consisting of pixel images of handwritten Arabic digits $0$ and $8$. We selected 1000 images of handwritten digits $0$ and $8$ each as the training set, with only one data point labeled in each class, and the remaining data points unlabeled. Both models were configured with parameters $\gamma_{A}=0.00025$, $\gamma_{I}=0.925$, and Gaussian kernel function. The conclusions in this section are similar for parameters within the same order of magnitude. For the NHKRLS algorithm, we chose $10$ diffusion steps, considering only the $5$-nearest neighbors in each diffusion. Since each data point is a $28\times 28$ pixel image, to construct the distance matrix between different data points, we used the Frobenius norm [reference]. We selected 500 unlabeled images of the same handwritten digits with a training set from the test set and performed classification using the classifiers obtained from different models.

  {\bf Fig\ref{fig:Error1}} illustrates the error curves of the three models under different numbers of labeled samples. It can be observed that, with varying numbers of labeled samples, the NHKRLS model consistently demonstrates higher classification accuracy. The LapRLS model shows lower classification accuracy when the number of labeled samples is too low, but its accuracy significantly improves when the number of labeled samples exceeds 32. The LS model performs poorly on this dataset, possibly due to its linearity, whereas the dataset exhibits a non-linear classification plane.
  \begin{figure}[htbp]
    \centering
    \subfloat[Experiment1]{\includegraphics[width=0.5\textwidth]{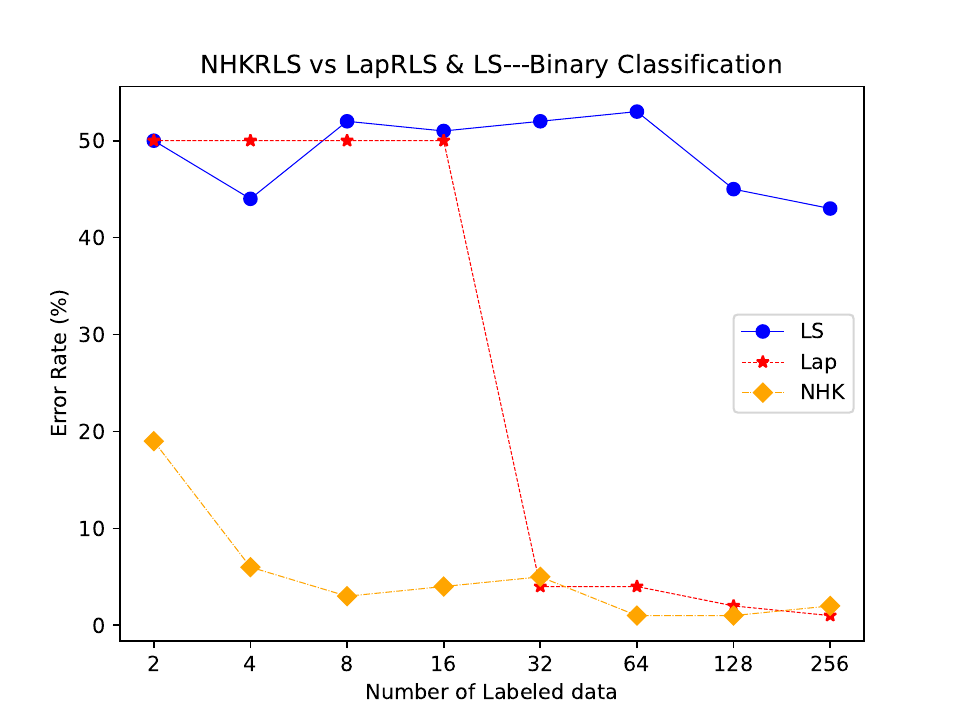}\label{fig:Error1}}
    \subfloat[Experiment2]{\includegraphics[width=0.5\textwidth]{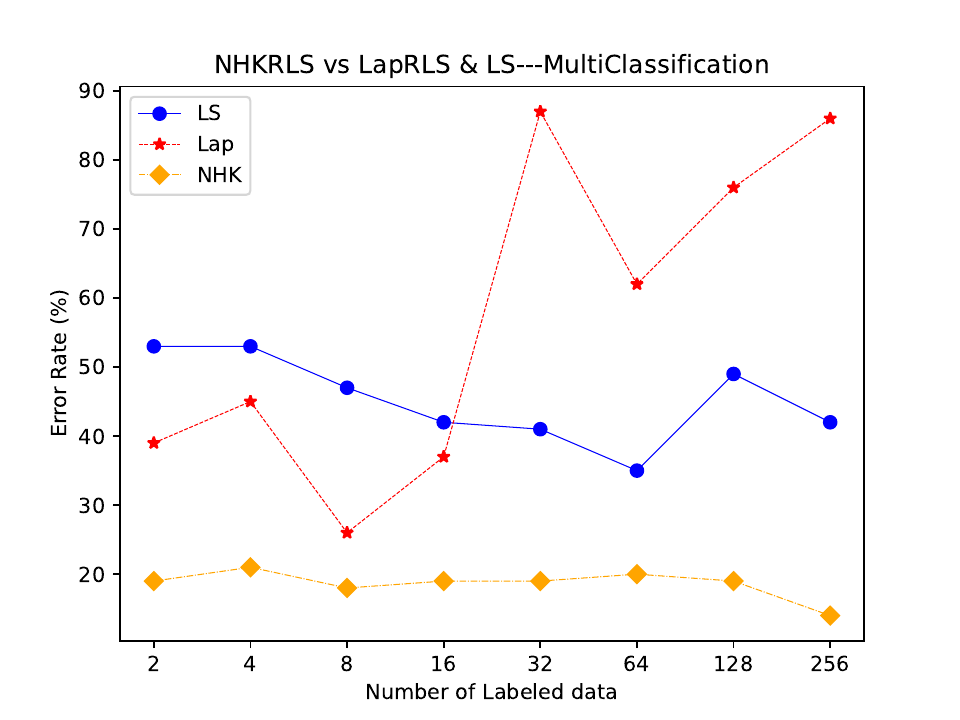}\label{fig:Error2}}
    \caption{Experiment1:The performance of NHKRLS, LapRLS and LS on  binary classification dataset with different numbers of labeled data.\quad Experiment2: The performance of these three model on multi classification dataset with different numbers of labeled data}
  \end{figure}

\subsubsection{Experiment2 Multi-Classification task}
In this experiment, we evaluated the performance of NHKRLS in a multi-classification task. We experimented on the handwritten digit dataset, employing the One-vs-Rest strategy for classification. Initially, we divided the training set into $10$ subsets, each containing $1900$ samples. Half of the samples in each subset represented images of a specific digit from $0$ to $9$, while the remaining samples consisted of an equal amount of other digits. For instance, in the \texttt{X\textunderscore one} set, there were $950$ instances of digit $1$, and each of the digits $0 $to $9$ (excluding $1$) appeared $105$ times, with digit $9$ occurring $110$ times to maintain an equal balance of positive and negative examples. Similar to Experiment $1$, we uniformly labeled $2, 4, \cdots, 256$ data points in both positive and negative classes, while the labels of the remaining points were set to $0$. We trained $10$ classifiers on these 10 subsets, each capable of recognizing a single digit. Let the classifier that identifies digit $i$ be denoted as $f_i$. For a test data point $x$, we represent its output vector $f(x)$ as $(f_1(x), …, f_{10}(x))$, and its final output label is determined as $i_{0} = \arg\max_{i} f(x)$.

We selected the first $100$ elements from the test set and tested the performance of NHKRLS, LapRLS, and LS classifiers in this multi-classification task. As shown in {\bf Fig\ref{fig:Error2}}, the NHK model exhibited stable classification accuracy, around $80\%$, across varying numbers of labeled datasets. The LapRLS model showed relatively high accuracy, around $65\%$, when the sample size was small, but experienced a sharp decline as the sample  size increased. The original paper mentions limited research on the correlation between the model’s generalization error and the variation in the number of labeled samples. We infer that this might be due to the increase in non-representative sample quantity when the number of labeled samples increases, leading to misidentification of geometric features between different data points and resulting in an increase in classification error. The LS model, as a linear classification model, still performed less effectively than the semi-supervised non-linear classification models in this multi-classification task.

To visually emphasize the differences between the two models, we visualized the results of the multi-classification task. Firstly, we built a test set containing 1000 sample points and used classifiers trained with 32 labeled sample points to classify them. We then selected sample points with labels 0 and 8 and applied the diffusion mapping algorithm to reduce them to three dimensions, resulting in the diffusion plot shown in {\bf Fig\ref{Fig:DR}}.
\begin{figure}[ht]
  \centering
  \subfloat[Test]{\includegraphics[width=0.4\textwidth]{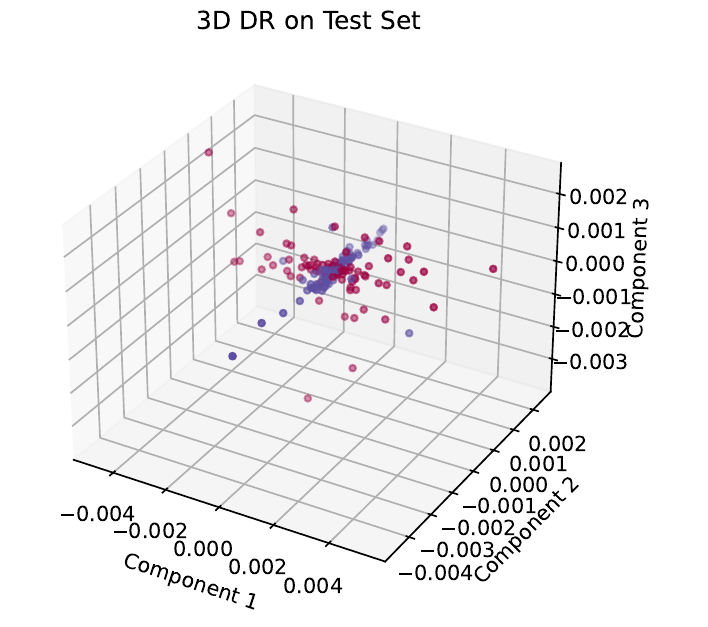}\label{fig:Test}}
  \subfloat[NHK]{\includegraphics[width=0.35\textwidth]{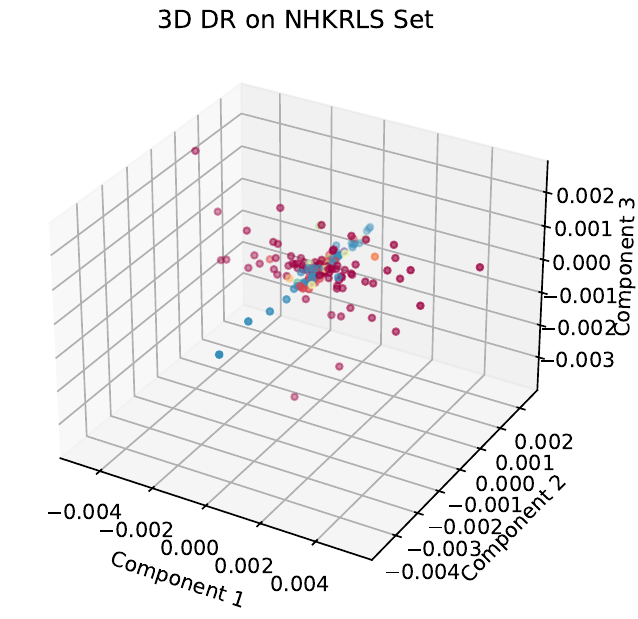}\label{fig:NHK}}\\
  \subfloat[Lap]{\includegraphics[width=0.4\textwidth]{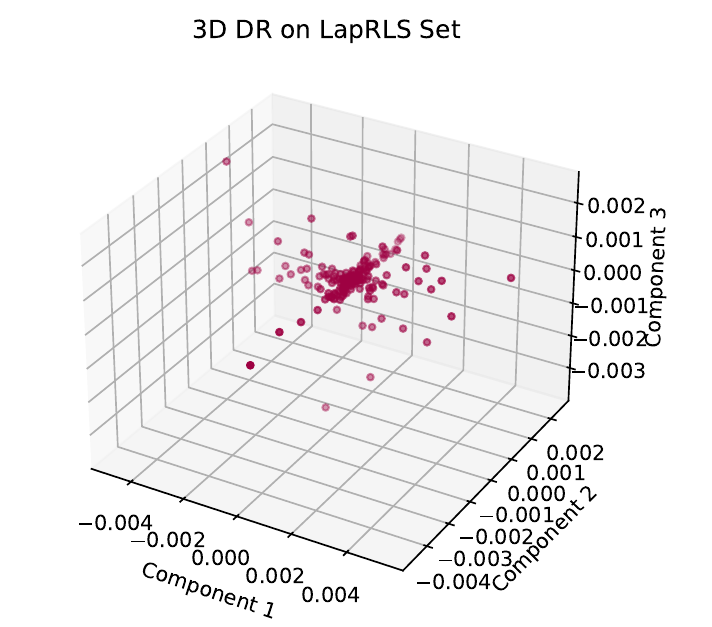}\label{fig:Lap}}
  \subfloat[LS]{\includegraphics[width=0.34\textwidth]{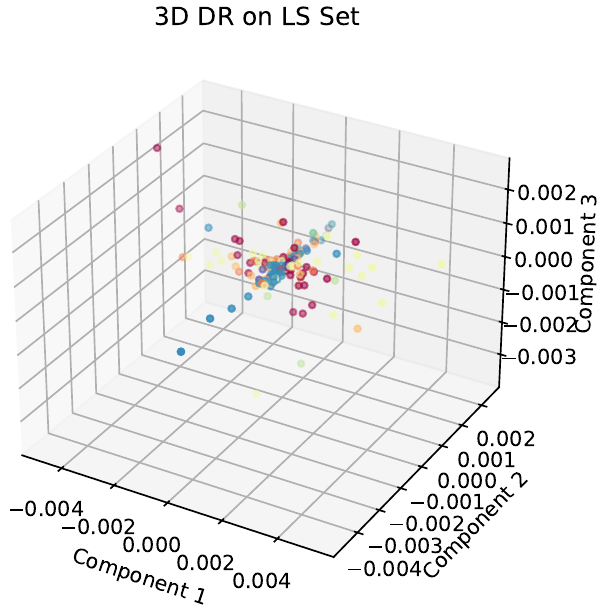}\label{fig:RL}}
  \caption{The color distribution on the reduced-dimensional test set based on the test labels is illustrated. In {\bf Fig\ref{fig:NHK},\ref{fig:Lap},\ref{fig:RL}}, the color distribution represents the output labels from NHK, Lap, and RL classifiers, respectively. In these figures, red represents the handwritten digit $0$, blue represents the handwritten digit $8$, and other colors indicate misclassified labels.}
  \label{Fig:DR}
\end{figure}
We can observe that the NHKRLS model classifier exhibits a relatively good classification performance with only a few data points misclassified. The LS classifier performs well in recognizing digit 8 but experiences numerous classification errors in identifying digit 0. In comparison, the Lap classifier incorrectly identifies all data points as 0, showing the poorest performance.

In conclusion, this section, through a comparison of the performance of NHKRLS, LapRLS, and LS models on various classification tasks in the MINIST dataset, demonstrates the classification accuracy of the NHKRLS model. It highlights the model's strong generalization ability with limited labeled data, confirming its superiority over the original model and traditional fully supervised classification models.
  \section{Conclusion}%
  In practical scenarios, acquiring a substantial amount of labeled data is often challenging and expensive. Manifold regularization model, as a form of semi-supervised learning, enhances model performance by effectively leveraging the positional relationships among unlabeled data, allowing models to achieve better results even when labeled data is scarce. However, the positional information of data may only reflect local features of the dataset in many cases, leading to sub-optimal performance of such methods.

This paper presents a feasible improvement framework for the manifold regularization model. It is based on a label propagation model constructed using an improved diffusion mapping algorithm  constructed by geodesic distances. By improving the model in this way, the classification task is viewed as a label diffusion process on the dataset, whereby finding a classifier is equivalent to finding the label distribution when label diffusion reaches a steady state. This improvement method combines heat conduction theory, heat kernel operators on manifolds, and Markov theory, providing new insights for the development of semi-supervised learning models.

There are several directions for future research:

\noindent{\bf  Selection of diffusion range and diffusion steps:}	 Different diffusion steps and the range of coloring samples in each diffusion step may affect the performance of the classifier. Due to varying sample densities around different samples, each diffusion step cannot ensure an equal number of labeled samples for both classes, which may result in one class having a significantly larger number of labeled samples than the other, thus affecting the classifier’s performance.\\
{\bf Performance on overly dense datasets:} The model performs well on datasets with clear decision boundaries. However, when the decision boundaries between two classes are unclear, unlabeled samples near the decision boundary will be influenced by labeled samples from both classes, thereby affecting the final classification result.\\
	{\bf Selection of diffusion models:} This paper constructs a label propagation model based on the classical heat conduction model. Different diffusion models are suitable for different types of datasets. Using alternative diffusion models to construct label propagation algorithms on datasets where certain heat conduction models perform poorly may result in better classifier performance. The discretization of infinitesimal generators for different diffusion processes can refer to the diffusion mapping algorithm proposed by Lafon

\bibliographystyle{unsrt}

\appendix

\section*{Appendix A. The asymptotic properties of improved probability transition operator}

In this appendix, we prove that the transition operator defined by geodesic distance has the same properties as it defined by Euclidean distance. Despite extensive research on the numerical computation of geodesic distances, there is still limited research on estimating geodesic distances globally. Fortunately, we don't need to rely on the global properties of geodesics. Locally, we can use orthogonal coordinate systems to calculate geodesic distance and obtain the transition operator defined by geodesics. We will provide proof of this in the subsequent part.

We assume that $\mathcal{M}$ is a smooth, compact sub-manifold of $\mathbb{R}^{n}$ with dimension $d$. For any point $p$ on the manifold $\mathcal{M}$, its tangent space is denoted as $T_{p}\mathcal{M}$. Assuming there exists a  orthonormal basis $\{v_{1},\cdots,v_{d}\}$ on the tangent space, any tangent vector $v$ can be reperesented in coordinates as $(u_{1},\cdots,u_{d})$. Additionally, using this orthonormal basis, we can generate local normal coordinates centered at $p$. If a point $q$ near $p$ has local coordinates $(s_{1},\cdots,s_{d})$, then $q=\exp_{p}(\sum_{i=1}^{d}{s_{i}v_{i}})$.

In this appendix, if $A$ is generated by Euclidean distance, it is denoted as $A_{E}$. If it is generated by geodesic distance, it is denoted as $A_{G}$. For kernel function $k_{\varepsilon}$, we denotes 
$$k_{E,\varepsilon}(x,y)=h(- \frac{\left\lVert x-y \right\rVert _{2}^2 }{\varepsilon})\quad k_{G,\varepsilon}(x,y) = h(-\frac{d_{G}(x,y)^2 }{\varepsilon})$$
, where $d_{G}(x,y)$ represents the geodesic distance on $\mathcal{M}$ between points $x$ and $y$.

\begin{lemma}
  $\forall x\in \mathcal{M}$, there exists an open neibourhood $U$ containing $x$, such that $\forall y\in U$, 
  $$d_{G}(x,y) = \sqrt{\sum_{i=1}^{d}s_{i}^2}$$
\end{lemma}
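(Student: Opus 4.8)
The plan is to recognize this as the classical statement that, inside a normal neighborhood, the geodesic distance from the center point equals the Euclidean norm of the normal coordinates, and to assemble it from three standard facts about the exponential map. First I would establish the neighborhood $U$. Since the differential $d(\exp_{p})_{0}=\mathrm{id}$ on $T_{p}\mathcal{M}$, the inverse function theorem furnishes a radius $\delta>0$ such that $\exp_{p}$ restricts to a diffeomorphism of the ball $\{v\in T_{p}\mathcal{M}:\left\lVert v\right\rVert<\delta\}$ onto an open set $U\ni x$; this $U$ is precisely the domain on which the normal coordinates $(s_{1},\dots,s_{d})$ are well-defined. For $y\in U$ I would write $y=\exp_{p}(w)$ with $w=\sum_{i=1}^{d}s_{i}v_{i}$, so that by orthonormality of $\{v_{i}\}$ one has $\left\lVert w\right\rVert^{2}=\sum_{i=1}^{d}s_{i}^{2}$.

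Second, I would produce the upper bound. The radial curve $\gamma(t)=\exp_{p}(tw)$, $t\in[0,1]$, is a geodesic from $x$ to $y$ with constant speed $\left\lVert w\right\rVert$, so its length is exactly $\left\lVert w\right\rVert=\sqrt{\sum_{i}s_{i}^{2}}$. This already gives $d_{G}(x,y)\leqslant\sqrt{\sum_{i}s_{i}^{2}}$.

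Third, and this is the crux, I would prove the matching lower bound via the Gauss Lemma, which asserts that the radial geodesics issuing from $x$ meet the geodesic spheres $\exp_{p}(\{\left\lVert v\right\rVert=r\})$ orthogonally. Decomposing the velocity of any piecewise-smooth competitor curve $\alpha$ from $x$ to $y$ into its radial and sphere-tangential parts, the orthogonality shows that only the radial component contributes to the arclength integral, giving $\mathrm{length}(\alpha)\geqslant\left\lVert w\right\rVert$, with equality exactly when $\alpha$ is a monotone reparametrization of $\gamma$. Combined with the upper bound this yields $d_{G}(x,y)=\sqrt{\sum_{i}s_{i}^{2}}$.

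The main obstacle is the Gauss Lemma step and its conversion into the length estimate; rather than reprove it I would cite the standard treatment in do Carmo \cite{doCarmo}, which also supplies the minimizing property of radial geodesics inside a normal ball. A minor technical caveat is that one may wish to shrink $U$ to a totally normal (geodesically convex) neighborhood so that the minimizing geodesic between any two of its points stays inside; but for the statement as posed, with $x$ pinned at the center of the coordinate system, the normal ball $U=\exp_{p}(\{\left\lVert v\right\rVert<\delta\})$ already suffices.
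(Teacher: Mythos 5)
Your proposal is correct, and it is in fact more complete than the paper's own argument. Both proofs share the same skeleton: construct normal coordinates centered at $x$, represent $y=\exp_{x}\left(\sum_{i=1}^{d}s_{i}v_{i}\right)$, and identify $d_{G}(x,y)$ with the length of the radial geodesic, which is $\sqrt{\sum_{i=1}^{d}s_{i}^{2}}$ because geodesics have constant speed. The differences are two. First, for the existence of the neighborhood $U$, the paper invokes Picard's theorem on the geodesic ODE system and takes an intersection of the resulting domains, whereas you invoke the inverse function theorem on $\exp_{x}$ (using $d(\exp_{x})_{0}=\mathrm{id}$); your route is the standard one and is what actually guarantees that the coordinates $(s_{1},\dots,s_{d})$ are well-defined, since ODE existence alone does not give that $\exp_{x}$ is a local diffeomorphism. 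Second, and more substantively, the paper computes the arc length of the radial geodesic and simply declares it equal to $d_{G}(x,y)$; since $d_{G}$ is an infimum over all curves joining $x$ to $y$, this only yields the upper bound $d_{G}(x,y)\leqslant\sqrt{\sum_{i}s_{i}^{2}}$. You correctly identify the matching lower bound as the crux and supply it via the Gauss Lemma (radial geodesics meet geodesic spheres orthogonally, so only the radial component of a competitor's velocity contributes to its length), citing do Carmo. Your caveat that a totally normal neighborhood is unnecessary here, because $x$ is pinned at the center, is also accurate. In short: same strategy, but your version closes a step the paper leaves implicit.
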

\begin{proof}
$\forall x\in\mathcal{M}$, the tangent space $T_{x}\mathcal{M}$ has a dimension of $d$. Hence, there exists an orthonormal basis consisting of $d$ vectors $\{v_{1},\cdots,v_{d}\}$. The parametrized geodesic curves generated by them are determined by(See \cite{doCarmo})
$$\left\{\begin{array}{l}
  \frac{D}{dt}(\gamma_{k}'(0))=\left(\sum_{i,j=1}^{n}\frac{dx_{j}}{dt}\frac{dx_{i}}{dt}\Gamma_{ij}^{k}+\frac{d^2 x_{i}}{dt}\right)v_{k} = 0\quad k=1,\cdots,d\\
  \gamma_{k}'(0) = e_{k}\\
  \gamma_{k}(0) = x
\end{array}\right.$$
, where $\Gamma_{ij}^{k}$ is the Christoffel symbol which is determined by the manifold itself. These equations are second-order nonlinear differential equations. According to Picard's theorem, for each equation, there exists an open set such that the solution of the equation exists and is unique. Taking the intersection of the preimage of these regions gives us the desired set $U$. Therefore, we obtain a normal coordinate system centered at $x$. For any $y\in U$, if $y$ has coordinates $(s_{1},\cdots,s_{d})$, $y = \exp_{x}(\sum_{i=1}^{d}s_{i}v_{i})$.  We know that the covariant derivative of the velocity $\gamma'(t)$ of a particle moving along a geodesic is zero. According to the geometric interpretation of the covariant derivative, the component of the particle's velocity in the tangent direction of the trajectory remains constant, which is equal to the initial velocity. From the form of the exponential map, it can be seen that the particle moves with $v=\sum_{i=1}^{d}s_{i}v_{i}$ as the initial velocity for unit time, and its arc length is
$$d_{G}(x,y) = \int_{0}^{1}|v|\,dt = \sqrt{\sum_{i=1}^{d}s_{i}^2} $$
\end{proof}
The following conclusions(See \cite{Lafon}) may be used in the subsequent theorems.
\begin{lemma}
  If $y\in\mathcal{M}$, is in a ball of radius $\varepsilon^\frac{1}{2}$ in $\mathbb{R}^{n}$, around $x$, then for $\varepsilon$ sufficiently small, there exists
  $$s_{i}=u_{i}+Q_{x,3}(u) +\mathcal{O}(\varepsilon^2)$$
  $$\left\lVert y-x \right\rVert _{2} ^2 = \left\lVert u \right\rVert ^2 + Q_{x,4}(u) + Q_{x,5}(u) + \mathcal{O}(\varepsilon^3) $$
  $$\det\left(\frac{dy}{du}\right) =1+Q_{x,2}(u) + Q_{x,3}(u) + \mathcal{O}(\varepsilon^2)$$
  where $Q_{x,m}(u)$ represents a homogeneous polynomial of de
\end{lemma}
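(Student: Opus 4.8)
The plan is to reduce all three expansions to a single Taylor expansion of the embedding read off in Riemannian normal coordinates, and then to perform careful degree-by-degree bookkeeping. Write $\Phi\colon U_0\subset\mathbb{R}^d\to\mathcal{M}\subset\mathbb{R}^n$ for the parametrization $\Phi(s)=\exp_{x}\!\left(\sum_{i=1}^{d}s_i v_i\right)$ coming from the normal coordinates centred at $x$ built in the previous lemma, so that $y=\Phi(s)$ and $\Phi(0)=x$. The tangent coordinates $u=(u_1,\dots,u_d)$ are the orthogonal projection of $y-x$ onto $T_x\mathcal{M}$, i.e. $u_i=\langle y-x,v_i\rangle$. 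Since $\|u\|=\mathcal{O}(\varepsilon^{1/2})$ throughout the ball of radius $\varepsilon^{1/2}$, a homogeneous polynomial $Q_{x,m}(u)$ of degree $m$ has size $\mathcal{O}(\varepsilon^{m/2})$; this dictionary converts degree counts into powers of $\varepsilon$ and is what lets me truncate each series at the stated order.

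First I would record the two structural facts about normal coordinates that do all the work: the metric satisfies $g_{ij}(0)=\delta_{ij}$ and $\partial_k g_{ij}(0)=0$, with curvature sitting in the second derivatives, $\partial_k\partial_\ell g_{ij}(0)=-\tfrac{1}{3}(R_{ikj\ell}+R_{i\ell jk})$. Since a geodesic on a submanifold has acceleration equal to the second fundamental form of its velocity, the embedding itself expands as
\begin{equation*}
  \Phi(s)=x+\sum_i s_i v_i+\tfrac{1}{2}\,\mathrm{II}_x\!\Big(\sum_i s_i v_i,\sum_j s_j v_j\Big)+\mathcal{O}(\|s\|^3),
\end{equation*}
where $\mathrm{II}_x$ is normal-valued, i.e. its image lies in $(T_x\mathcal{M})^{\perp}$. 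This is the single most important observation, because it forces the leading correction to $\Phi$ into the normal directions, so it does not contribute to the tangent projection at quadratic order.

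From here the three expansions follow by elementary manipulation. Projecting $\Phi(s)-x$ onto $T_x\mathcal{M}$ and using $\langle\mathrm{II}_x(v,v),v_i\rangle=0$ gives $u_i=s_i+\mathcal{O}(\|s\|^3)$; inverting this series yields $s_i=u_i+Q_{x,3}(u)+\mathcal{O}(\varepsilon^2)$, the cubic term being the first tangential contribution at third order in $\Phi$. For the Euclidean distance I would split $y-x$ into its tangential part $\sum_i u_i v_i$ and its normal part $\nu(s)$, so that $\|y-x\|^2=\|u\|^2+\|\nu(s)\|^2$ by orthogonality; since $\nu(s)=\tfrac12\mathrm{II}_x(v,v)+\mathcal{O}(\|s\|^3)$ is quadratic-leading, its square begins at degree four, giving $\|y-x\|^2=\|u\|^2+Q_{x,4}(u)+Q_{x,5}(u)+\mathcal{O}(\varepsilon^3)$ after replacing $s$ by $u$ (whose discrepancy is only cubic). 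Finally, the Jacobian of $u\mapsto y$, equivalently the volume element $\sqrt{\det g}$ pulled through the $s\leftrightarrow u$ change of variables, expands as $1+Q_{x,2}(u)+Q_{x,3}(u)+\mathcal{O}(\varepsilon^2)$, its quadratic term collecting the curvature contribution $-\tfrac16\,\mathrm{Ric}_x(u,u)$ from $\sqrt{\det g}$ together with the correction from the nonlinear $s\leftrightarrow u$ substitution.

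The routine but genuinely delicate part, and the step I expect to be the main obstacle, is the degree bookkeeping that certifies the \emph{absence} of the lower-order terms the statement omits: that $s_i-u_i$ has no quadratic piece, that $\|y-x\|^2-\|u\|^2$ has no cubic piece, and that the Jacobian has no linear piece. Each vanishing is a consequence of a normal-coordinate identity ($\partial_k g_{ij}(0)=0$, the normality of $\mathrm{II}_x$, and the parity of the curvature expansion), so the real work is organizing the Taylor coefficients by homogeneous degree and matching them against the $\mathcal{O}(\varepsilon^{m/2})$ dictionary, rather than any single hard estimate. Uniformity of the remainder constants over $x$ then follows from compactness of $\mathcal{M}$, which bounds all the curvature and second-fundamental-form data appearing in the error terms.
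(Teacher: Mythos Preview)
The paper does not actually prove this lemma: it is stated immediately after the sentence ``The following conclusions (See \cite{Lafon}) may be used in the subsequent theorems'' and is simply imported from Coifman--Lafon without argument. Your proposal therefore goes beyond what the paper does, and the outline you give is essentially the standard derivation one finds in that reference: expand the embedding $\Phi(s)=\exp_x(\sum s_iv_i)$ in Taylor series, use that the quadratic term is $\tfrac12\,\mathrm{II}_x$ and hence normal to $T_x\mathcal{M}$, and read off each expansion from the resulting degree bookkeeping. Your identification of the key cancellations (no quadratic term in $s_i-u_i$, no cubic term in $\|y-x\|^2-\|u\|^2$, no linear term in the Jacobian) and of their source in the normal-coordinate identities $g_{ij}(0)=\delta_{ij}$, $\partial_kg_{ij}(0)=0$ is correct, and the $\|u\|=\mathcal{O}(\varepsilon^{1/2})\Rightarrow Q_{x,m}(u)=\mathcal{O}(\varepsilon^{m/2})$ dictionary is exactly how the $\varepsilon$-remainders are obtained. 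There is nothing to correct; your sketch is the proof the paper elects to cite rather than reproduce.
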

We investigate some asymptotic properties of the linear functional defined by the kernel function generated by geodesic distance. Let
  $$G_{G,\varepsilon} = \int_{\mathcal{M}}k_{G,\varepsilon}(x,y)f(y)\,dy$$
  We prove that the estimation of $G_{G,\varepsilon}$ has same form with the it of $G_{\varepsilon}=\frac{1}{\varepsilon^{\frac{d}{2}}}\int_{\mathcal{M}}k_{E,\varepsilon}(x,y)f(y)\,dy$ That is,
\begin{theorem}
\label{theorem}
  Let $f\in C^3(\mathcal{M})$, $\gamma\in (0,\frac{1}{2})$. We have unifolrmly for all $x\in\mathcal{M_{\varepsilon^{\gamma}}}$
  $$G_{G,\varepsilon}f(x) = m_{0}f(x)+\varepsilon \frac{m_{2}}{2}(\omega(x)f(x)+\Delta f(x))+\mathcal{O}(\varepsilon^2)$$ 
  where $\mathcal{M}_{\varepsilon}=\{y\in\mathcal{M}|d_{E}(y,\p\mathcal{M})>\varepsilon\}$
\end{theorem}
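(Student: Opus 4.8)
The plan is to reduce $G_{G,\varepsilon}f(x)$ to a Laplace-type Gaussian integral in geodesic normal coordinates centered at $x$, exploiting the fact that, by Lemma~1, the geodesic kernel is \emph{exactly} a function of $|s|^2/\varepsilon$ in those coordinates. This is the key structural advantage over the Euclidean kernel: there $\|y-x\|^2 = |u|^2 + Q_{x,4}(u)+Q_{x,5}(u)+\mathcal{O}(\varepsilon^3)$ forces one to expand $h$ itself, whereas here $k_{G,\varepsilon}(x,y)=h(-|s|^2/\varepsilon)$ carries no curvature correction inside $h$. Consequently every $\mathcal{O}(\varepsilon)$ and $\mathcal{O}(\varepsilon^2)$ correction will come only from the Taylor expansion of $f$ along geodesics and from the Riemannian volume element.

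First I would localize. Because $x\in\mathcal{M}_{\varepsilon^\gamma}$ with $\gamma<\tfrac12$ and $\mathcal{M}$ is compact, there is a uniform lower bound on the injectivity radius, so for $\varepsilon$ small the geodesic ball of radius $\varepsilon^{1/2-\delta}$ about $x$ lies in a single normal chart and away from $\partial\mathcal{M}$. Since $h$ decays like a Gaussian, the contribution of $\mathcal{M}$ outside this ball is $\mathcal{O}(\varepsilon^{N})$ for every $N$, hence absorbed into the $\mathcal{O}(\varepsilon^2)$ remainder uniformly in $x$. I would then change variables to normal coordinates $s=(s_1,\dots,s_d)$ via $y=\exp_x(\sum_i s_i v_i)$. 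By Lemma~1, $d_G(x,y)^2=\sum_i s_i^2=|s|^2$, and $dy=\sqrt{\det g(s)}\,ds$, where in normal coordinates the metric determinant admits the standard expansion $\sqrt{\det g(s)}=1+p_{2}(s)+p_{3}(s)+\mathcal{O}(|s|^4)$ with $p_m$ homogeneous of degree $m$, $p_2(s)=-\tfrac16\mathrm{Ric}_{ij}s^is^j$, and $p_3$ odd. Simultaneously $f(\exp_x(s))=f(x)+\sum_i\partial_i f\,s_i+\tfrac12\sum_{ij}\partial_{ij}f\,s_is_j+\mathcal{O}(|s|^3)$, and at the center the Laplace--Beltrami operator reduces to $\Delta f(x)=\sum_i\partial_{ii}f(x)$.

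The computation then collapses to Gaussian moments. Substituting $s=\sqrt{\varepsilon}\,z$ gives $h(-|s|^2/\varepsilon)=h(-|z|^2)$ and $ds=\varepsilon^{d/2}dz$; with the same $\varepsilon^{-d/2}$ normalization as $G_\varepsilon$, I multiply the three expansions together and integrate. By parity, every odd-degree monomial in $z$ vanishes, so the surviving terms are degree $0$, giving $m_0 f(x)$ with $m_0=\int_{\mathbb{R}^d}h(-|z|^2)\,dz$, and degree $2$, giving $\varepsilon\tfrac{m_2}{2}(\omega(x)f(x)+\Delta f(x))$ with $m_2=\int_{\mathbb{R}^d}z_1^2\,h(-|z|^2)\,dz$, where $\omega(x)$ collects the curvature contribution of $p_2$. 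The leftover degree-four and Jacobian cross terms contribute at order $\varepsilon^2$, and the uniformity in $x$ over $\mathcal{M}_{\varepsilon^\gamma}$ follows from compactness, which bounds the curvature, its derivatives, and the third derivatives of $f$ appearing in the remainders.

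\textbf{The main obstacle} is to verify that this $\omega(x)$ and the constants $m_0,m_2$ coincide with those produced by the Euclidean kernel---i.e.\ that replacing $\|y-x\|$ by $d_G(x,y)$ does not alter the leading two orders. The subtlety is that the geodesic kernel sees only the intrinsic Ricci curvature through $p_2$, whereas the Euclidean $\omega$ arises from how $\mathcal{M}$ curves in the ambient $\mathbb{R}^n$; reconciling the two requires the Gauss equation, which for a submanifold of flat space expresses intrinsic curvature entirely through the second fundamental form. This is exactly where Lemma~2 enters: the relations $s_i=u_i+Q_{x,3}(u)+\mathcal{O}(\varepsilon^2)$ and $\det(dy/du)=1+Q_{x,2}(u)+Q_{x,3}(u)+\mathcal{O}(\varepsilon^2)$ let me match the geodesic change of variables against the Euclidean one term by term and confirm that the discrepancies are either odd in $z$ (vanishing by parity) or of order $\varepsilon^2$, so that the two expansions agree through the stated order.
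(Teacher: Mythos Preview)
Your argument is correct, but it follows a genuinely different route from the paper's. The paper does \emph{not} work in normal coordinates $s$; instead it converts everything to the tangent-space coordinates $u$ of Lemma~2. Concretely, it writes $d_G(x,y)^2=\sum s_i^2$, substitutes $s_i=u_i+Q_{x,3}(u)+\mathcal{O}(\varepsilon^2)$ to obtain $d_G^2=\|u\|^2+Q_{x,4}(u)+Q_{x,6}(u)+\mathcal{O}(\varepsilon^4)$, and then Taylor-expands $h$ about $\|u\|^2/\varepsilon$; the volume element is $\det(dy/du)$ from Lemma~2 rather than $\sqrt{\det g(s)}$. This is done deliberately so that the computation mirrors Lafon's Euclidean proof line by line, and the resulting $\omega(x)=\tfrac{2}{m_2}\int(Q_{x,4}h'+Q_{x,2}h)$ has the same \emph{shape} as in the Euclidean case. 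Your approach is cleaner and more intrinsic: staying in $s$ makes the kernel exactly $h(|s|^2/\varepsilon)$, so no expansion of $h$ is needed and the only $\mathcal{O}(\varepsilon)$ corrections come from $\sqrt{\det g}$ and the Hessian of $f$. What the paper's route buys is an explicit, term-by-term identification with the Euclidean calculation; what yours buys is a shorter, curvature-transparent derivation with $\omega(x)$ expressed via the Ricci tensor.

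One remark on your ``main obstacle'': the theorem as stated only asserts the existence of \emph{some} smooth $\omega(x)$ in the expansion, and for the downstream application (Proposition~1, $P_{G,\varepsilon}^{t/\varepsilon}\to e^{t\Delta}$) the function $\omega$ cancels when one normalizes by $d_\varepsilon(x)=G_{G,\varepsilon}1(x)$. So verifying that your intrinsic $\omega$ agrees with the Euclidean one via the Gauss equation is an interesting consistency check but not a logical requirement of the proof; your first two paragraphs already establish the theorem.
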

This proof is similar to the euclidean version:
\begin{proof}
   Firstly, we give the estimation of $d_{G}(x,y)^2$. 
    Since $d_{G}(x,y)^2 = \sum_{i=1}^{d}{s_{i}^2}$ and according to the estimation of $s_{i}$, 
    $$d_{G}(x,y)^2 = \sum_{i=1}^{d}\left(u_{i}+Q_{x,3}(u)+\mathcal{O}(\varepsilon^2)\right)^2 = \left\lVert u \right\rVert _{2}^2 + Q_{x,4}(u) + Q_{x,6}(u)+\mathcal{O}(\varepsilon^4) $$
    Now we show that when $\varepsilon$ is sufficiently small, $G_{G,\varepsilon}f(x)$ is determined only by the points near $x$.
    $$G_{G,\varepsilon}f(x) = \frac{1}{\varepsilon^{\frac{d}{2}}}\left(\int_{B(x,{\varepsilon^{\gamma}})}k_{G,\varepsilon}(x,y)f(y)\,dy+\int_{\mathcal{M}\backslash B(x,{\varepsilon^{\gamma}})}k_{G,\varepsilon}(x,y)f(y)\,dy\right)$$
    Since $f$ is continous on $\mathcal{M}$, and $\mathcal{M}$ is compact, the maximum of $f$ exists. 
    \begin{align*}
      \frac{1}{\varepsilon^{\frac{d}{2}}}\int_{\mathcal{M}\backslash B(x,{\varepsilon^{\gamma}})}k_{G,\varepsilon}(x,y)f(y)\,dy&\leqslant \frac{1}{\varepsilon^{\frac{d}{2}}}\left\lVert f \right\rVert _{\infty} \int_{\mathcal{M}\backslash B(x,{\varepsilon^{\gamma}})}k_{G,\varepsilon}(x,y)\,dy\\
      &\leqslant \frac{1}{\varepsilon^{\frac{d}{2}}}\left\lVert f \right\rVert _{\infty} \int_{\mathcal{M}\backslash B(x,{\varepsilon^{\gamma}})}k_{E,\varepsilon}(x,y)\,dy\\
      &\leqslant\frac{1}{\varepsilon^{\frac{d}{2}}}\left\lVert f \right\rVert _{\infty} \int_{B(x,{\varepsilon^{\gamma}})^{c}}k_{E,\varepsilon}(x,y)\,dy
    \end{align*}
    Applying the method of integration by substitution to the above equation, consider the following mapping:\\
    $\map{\Phi}{\mathbb{R}^{n}}{\mathbb{R}^{n}}{t}{x+\sqrt{\varepsilon}t}$
    \begin{align*}
      &\frac{1}{\varepsilon^{\frac{d}{2}}}\left\lVert f \right\rVert _{\infty} \int_{B(x,{\varepsilon^{\gamma}})^{c}}k_{E,\varepsilon}(x,y)f(y)\,dy\\
      =& \frac{1}{\varepsilon^{\frac{d}{2}}}\left\lVert f \right\rVert _{\infty} \int_{B(0,{\varepsilon^{\gamma-\frac{1}{2}}})^{c}}h\left(\frac{\left\lVert x- (x+\sqrt{\varepsilon}t) \right\rVert _{2}^2}{\varepsilon} \right)\varepsilon^{\frac{n}{2}}\,dt\\
      =&\varepsilon^{\frac{n-d}{2}}\left\lVert f \right\rVert _{\infty} \int_{B(0,{\varepsilon^{\gamma-\frac{1}{2}}})^{c}}h\left( \left\lVert t \right\rVert _{2}^2 \right)\,dt
    \end{align*}
    Here we set the kernel function a Gaussian kernel. According to Coarea formula(See \cite{Evans}), we have
    \begin{align*}
      \int_{B(0,{\varepsilon^{\gamma-\frac{1}{2}}})^{c}}h\left( \left\lVert t \right\rVert _{2}^2 \right)\,dt&=\int_{0}^{+\infty}\int_{\mathbf{S}^{n-1}}r^{n-1} e^{-r^2}\mathbf{1}_{B(0,{\varepsilon^{\gamma-\frac{1}{2}}})^{c}}\,d\nu dr\\
      &=\int_{\varepsilon^{\gamma-\frac{1}{2}}}^{+\infty}\int_{\mathbf{S}^{n-1}} r^{n-1}e^{-r^2}\,d\nu dr\\
      &=|\mathbf{S}^{n-1}|\int_{\varepsilon^{\gamma-\frac{1}{2}}}^{+\infty}r^{n-1}e^{-r^2}\,dr
    \end{align*}
    Since the integrand decays exponentially, it can be proven to be an arbitrary high-order infinitesimal of $\varepsilon$ (simply by expanding $e^{r^2}$in a Taylor series to a sufficient number of terms, and observe that when $\gamma\in(0,\frac{1}{2})$, $\varepsilon^{\gamma-\frac{1}{2}}\to +\infty$, and then estimating the integral)
    
    Therefore, when epsilon is sufficiently small, the asymptotic properties of $G_{G,\varepsilon}f(x)$ are determined only by the points near $x$. Note that in proving the above proposition, we did not use normal coordinates. This indicates that the properties of G can be derived from local geodesic distances.
     Finally, we demonstrate the asymptotic estimation of $Gf(x)$ near $x$. Firstly, applying Taylor expansion to $k_{G,\varepsilon}$ at $\frac{\left\lVert u \right\rVert _{2}^2 }{\varepsilon}$
    $$k_{G}(x,y) = h\left(\frac{\left\lVert u \right\rVert _{2}^2 }{\varepsilon}\right) + \frac{1}{\varepsilon}h'\left(\frac{\left\lVert u \right\rVert _{2}^2 }{\varepsilon}\right)\left(Q_{x,4}(u)+Q_{x,6}(u)+\mathcal{O}(\varepsilon^{4})\right)+\mathcal{O}(\varepsilon^3)$$
    Then, in normal coordinates centered at $x$, the function $f$ on the manifold can be represented as a multivariate function $\tilde{f}(s_{1},\cdots,s_{d})$. Expanding the substituted function to the second order, we obtain:
    \begin{align*}
      \tilde{f}(s_{1},\cdots,s_{d}) &= \tilde{f}(0) + \sum_{i=1}^{d} \frac{\p \tilde{f}}{\p s_{i}}(0)s_{i}+\frac{1}{2}\sum_{i,j=1}^{d} \frac{\p^2 \tilde{f}}{\p s_{i}\p s_{j}}(0)s_{i}s_{j}+\sum_{i,j,k=1}^{d}\frac{\p^3 \tilde{f}}{\p s_{i}\p s_{j}\p s_{k}}(0)s_{i}s_{j}s_{k}\\
      &+\mathcal{O}(\varepsilon^2)\\
      &=\tilde{f}(0) + \sum_{i=1}^{d} \frac{\p \tilde{f}}{\p s_{i}}(0)u_{i}+\sum_{i,j=1}^{d} \frac{\p^2 \tilde{f}}{\p s_{i}\p s_{j}}(0)u_{i}u_{j}+Q_{x,3}(u)+\mathcal{O}(\varepsilon ^2)
    \end{align*}
    We multiply these estimates and use the conclusion from Lemma 4 to transform the integral over the sub-manifold to Euclidean space
    \begin{align*}
      G_{G,\varepsilon}f(x) =& \frac{1}{\varepsilon^{\frac{d}{2}}}\int_{B(0,\varepsilon^{\gamma})}\left(h\left(\frac{\left\lVert u \right\rVert _{2}^2 }{\varepsilon}\right) + \frac{1}{\varepsilon}h'\left(\frac{\left\lVert u \right\rVert _{2}^2 }{\varepsilon}\right)Q_{x,4}(u)+\mathcal{O}(\varepsilon^{2})\right)\\
      &\times\left(\tilde{f}(0) + \sum_{i=1}^{d} \frac{\p \tilde{f}}{\p s_{i}}(0)u_{i}+\frac{1}{2}\sum_{i,j=1}^{d} \frac{\p^2 \tilde{f}}{\p s_{i}\p s_{j}}(0)u_{i}u_{j}+Q_{x,3}(u)+\mathcal{O}(\varepsilon ^2)\right)\\
      &\times\left(1+Q_{x,2}(u) + Q_{x,3}(u) + \mathcal{O}(\varepsilon^2)\right)\,du
    \end{align*}
    The odd-power terms can be eliminated during integration. Expanding and simplifying the above expression, we get:
    \begin{align*}
      G_{G,\varepsilon}f(x) =\frac{1}{\varepsilon^{\frac{d}{2}}}\left(\tilde{f}(0)\int_{B(0,\varepsilon^{\gamma})}h\left(\frac{\left\lVert u \right\rVert _{2}^2 }{\varepsilon}\right)\,du+\tilde{f}(0)\frac{1}{\varepsilon}\int_{B(0,\varepsilon^{\gamma})}h'\left(\frac{\left\lVert u \right\rVert _{2}^2 }{\varepsilon}\right)Q_{x,4}(u)\right.\\
       \left(+ \tilde{f}(0)\int_{B(0,\varepsilon^{\gamma})}h\left(\frac{\left\lVert u \right\rVert _{2}^2 }{\varepsilon}\right)Q_{x,2}(u)\,du+ \frac{1}{2}\sum_{i=1}^{d}{\frac{\p \tilde{f}}{\p s_{i}^2}(0)}\int_{B(0,\gamma)}u_{1}^2 h\left(\frac{\left\lVert u \right\rVert _{2}^2 }{\varepsilon}\right)\,du\right)+\mathcal{O}(\varepsilon^2)
    \end{align*}
    Similarly, under a scaling transformation, we have:
    \begin{align*}
      G_{G,\varepsilon}f(x) =& \tilde{f}(0)\int_{B(0,\varepsilon^{\gamma-\frac{1}{2}})}h(\left\lVert u \right\rVert _{2}^2 )\,du + \frac{\varepsilon}{2}\int_{B(0,\varepsilon^{\gamma-\frac{1}{2}})}\left(\sum_{i=1}^{d}\frac{\p^2 \tilde{f}}{\p s_{i}^2}(0)\right)\int_{B(0,\varepsilon^{\gamma-\frac{1}{2}})}u_{1}^2 h(\left\lVert u \right\rVert _{2}^2 )\,du\\
      &+\varepsilon \tilde{f}(0)\int_{B(0,\varepsilon^{\gamma-\frac{1}{2}})}\left(Q_{x,4}(u)h'(\left\lVert u \right\rVert _{2}^2 )+Q_{x,2}(u)h(\left\lVert u \right\rVert _{2}^2 )\right)\,du +\mathcal{O}(\varepsilon^2 )
    \end{align*}
    Let $m_{0} = \int_{\mathbb{R}^{d}}h(\left\lVert u \right\rVert _{2}^2 )\,du$, $m_{2} = \int_{\mathbb{R}^{d}}u_{1}^2 h(\left\lVert u \right\rVert _{2}^2 )\,du$, and
    \\$\omega(x) =\frac{2}{m_{2}}\int_{\mathbb{R}^{d}}\left(Q_{x,4}(u)h'(\left\lVert u \right\rVert _{2}^2 )+Q_{x,2}(u)h(\left\lVert u \right\rVert _{2}^2 )\right)\,du $.
    
    Let $\varepsilon\to 0$, we finally obtain
    $$G_{G,\varepsilon}f(x) =m_{0}f(x)+\varepsilon \frac{m_{2}}{2}(\omega(x)f(x)+\Delta f(x))+\mathcal{O}(\varepsilon^2)$$
    which is same with the euclidean case.
\end{proof}
Using the above result, we can deduce the following conclusion
\begin{proposition}
  $\lim_{\varepsilon \to 0}P_{G,\varepsilon}^{\frac{t}{\varepsilon}}=e^{t\Delta}$
\end{proposition}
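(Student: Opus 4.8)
The plan is to reduce this statement to the Euclidean diffusion-map convergence (\textbf{Theorem \ref{thm1}}) by showing that the geodesic transition operator $P_{G,\varepsilon}$ has exactly the same infinitesimal generator as its Euclidean analogue, and then to upgrade this generator convergence to convergence of the iterated operators via a general semigroup-approximation theorem. Recall that the diffusion-map normalization produces the transition operator as the ratio $P_{G,\varepsilon}f = G_{G,\varepsilon}f / G_{G,\varepsilon}\mathbf{1}$, where $G_{G,\varepsilon}$ is the geodesic kernel functional studied in \textbf{Theorem \ref{theorem}}; since the $\varepsilon^{-d/2}$ normalization is common to numerator and denominator, it cancels in this ratio. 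The whole point of Appendix A is that \textbf{Theorem \ref{theorem}} gives $G_{G,\varepsilon}$ precisely the same asymptotic expansion as the Euclidean functional $G_{\varepsilon}$, which licenses carrying over the Euclidean computation verbatim.

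First I would compute the generator. Applying \textbf{Theorem \ref{theorem}} to $f$ and to the constant function (for which $\Delta\mathbf{1}=0$) gives $G_{G,\varepsilon}f = m_{0}f + \varepsilon\tfrac{m_{2}}{2}(\omega f + \Delta f) + \mathcal{O}(\varepsilon^2)$ and $G_{G,\varepsilon}\mathbf{1} = m_{0} + \varepsilon\tfrac{m_{2}}{2}\omega + \mathcal{O}(\varepsilon^2)$. Forming the ratio and expanding to first order, the decisive cancellation occurs: the density/curvature-bias term $\omega(x)$ drops out, leaving $P_{G,\varepsilon}f = f + \varepsilon\,\tfrac{m_{2}}{2m_{0}}\Delta f + \mathcal{O}(\varepsilon^2)$. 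Hence $\lim_{\varepsilon\to 0}\varepsilon^{-1}(P_{G,\varepsilon}-I)f = \tfrac{m_{2}}{2m_{0}}\Delta f$ for every $f\in C^{\infty}(\mathcal{M})$, and the constant $\tfrac{m_{2}}{2m_{0}}$ can be normalized to $1$ by a suitable choice of kernel (equivalently, absorbed into a rescaling of $t$), so that the limiting generator is exactly the Laplace--Beltrami operator $\Delta$.

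Next I would convert this generator convergence into the stated limit by a Chernoff/Trotter--Kato product argument. Chernoff's product formula asserts that if $\{F(\varepsilon)\}$ is a family of contractions on a Banach space with $F(0)=I$, whose strong derivative at $0$ agrees on a core with an operator $A$ that generates a $C_{0}$ contraction semigroup, then $F(\varepsilon)^{t/\varepsilon}\to e^{tA}$ strongly, uniformly on compact $t$-intervals. Taking $F(\varepsilon)=P_{G,\varepsilon}$, the hypotheses are met: $P_{G,\varepsilon}$ is a Markov averaging operator, hence a contraction; $\Delta$ generates the heat semigroup with the bound $\lVert e^{t\Delta}\rVert\leqslant 1$ recorded earlier (See \cite{contractionsemigroup}); and the generator convergence on the dense core $C^{\infty}(\mathcal{M})$ is exactly what the previous paragraph supplies. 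The conclusion $P_{G,\varepsilon}^{t/\varepsilon}\to e^{t\Delta}$ then follows immediately.

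The hard part will be the analytic hypotheses of the semigroup theorem rather than the formal expansion. Two points need care. The $L^2$-contractivity of $P_{G,\varepsilon}$ is transparent in $L^{\infty}$ (it is a stochastic kernel), but on $L^2(\mathcal{M})$ one should argue through the reversible stationary measure of the associated Markov chain, conjugating $P_{G,\varepsilon}$ to a self-adjoint operator so that Chernoff's contraction requirement applies cleanly. More delicate is the boundary: \textbf{Theorem \ref{theorem}} is valid only on the interior set $\mathcal{M}_{\varepsilon^{\gamma}}$, so when $\mathcal{M}$ has a boundary the behaviour of $P_{G,\varepsilon}$ in the $\varepsilon^{\gamma}$-collar must be analyzed separately, and it is precisely there that the no-flux condition arises, pinning the limit down as the \emph{Neumann} heat semigroup rather than the free one. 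When $\partial\mathcal{M}=\varnothing$ this issue disappears and the argument above is complete as stated.
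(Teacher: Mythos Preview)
Your proposal is correct and follows exactly the route the paper intends: the paper's entire proof is the one-line remark that, because \textbf{Theorem \ref{theorem}} gives $G_{G,\varepsilon}$ the same asymptotic expansion as the Euclidean functional, the argument for \textbf{Theorem \ref{thm1}} carries over verbatim. You have simply spelled out what that Euclidean argument is (the generator computation plus a Chernoff/Trotter--Kato product step), so your write-up is a faithful, more detailed version of the paper's deferred proof.
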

The proof is same with euclidean case, so, I won't go into further detail.

\section*{Appendix B. Some properties of extended label propogation function}
In this appendix, we will prove some properties of the extended label propagation function mentioned in constructing the NHK Regularization model. 
Firstly, we prove the existence of an extended label propagation function
\begin{theorem}[The existence of Label Propagation function]
  $X$ is a dataset with a finite number of elements. Considering the binary classification problem, $X_{l}$ is the labeled dataset with corresponding label set $Y_{l}$, and $X_{u}$ is the unlabeled dataset. There exists $Q(x)\in C^{\infty}(\mathcal{M})$ such that
  $$\mathrm{range}(Q|_{X_{l}}) = \{+1,-1\},\quad \mathrm{range}(Q|_{X_{u}})=\{0\}$$  
  where $\mathcal{M}\subset \mathbb{R}^{n}$ is a $d$-dimensional sub-manifold the dataset embedded.
\end{theorem}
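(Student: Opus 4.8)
The plan is to treat this as a smooth interpolation problem on the manifold and to produce $Q$ by hand as a finite sum of localized smooth bump functions, one attached to each labeled data point. Since $X = X_{l}\sqcup X_{u}$ is finite and $\mathcal{M}\subset\mathbb{R}^{n}$ is a smooth $d$-dimensional submanifold (hence a Hausdorff smooth manifold), the construction will rely on only two standard facts: that finitely many distinct points can be separated by pairwise disjoint open neighborhoods, and that a smooth manifold admits smooth bump functions localized in any prescribed open set.

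First I would enumerate the data points and, using the Hausdorff property, choose for every $x\in X$ an open neighborhood $U_{x}\subset\mathcal{M}$ small enough that the collection $\{U_{x}\}_{x\in X}$ is pairwise disjoint; in particular each $U_{x}$ contains no data point other than $x$. Next, for each labeled point $x_{i}\in X_{l}$ I would invoke the existence of a smooth bump function $\phi_{i}\colon\mathcal{M}\to[0,1]$ with $\phi_{i}(x_{i})=1$ and $\mathrm{supp}(\phi_{i})\subset U_{x_{i}}$. Concretely this is obtained by taking a coordinate chart around $x_{i}$ (available because $\mathcal{M}$ is a $d$-dimensional smooth submanifold), pulling back the standard Euclidean bump that equals $1$ at the origin and vanishes outside a small ball contained in the chart image, and extending by zero. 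I would then define
$$Q(x) = \sum_{x_{i}\in X_{l}} y_{i}\,\phi_{i}(x),$$
where $y_{i}\in\{+1,-1\}$ is the label of $x_{i}$ recorded in $Y_{l}$.

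It remains to verify the three required properties. Smoothness of $Q$ is immediate, since it is a finite linear combination of smooth functions on $\mathcal{M}$. For a labeled point $x_{j}\in X_{l}$, disjointness of the supports forces $\phi_{i}(x_{j})=0$ for $i\neq j$, so $Q(x_{j})=y_{j}\phi_{j}(x_{j})=y_{j}\in\{+1,-1\}$; because the binary problem contains both classes we obtain $\mathrm{range}(Q|_{X_{l}})=\{+1,-1\}$. For an unlabeled point $x\in X_{u}$, the point lies in $U_{x}$, which is disjoint from every $U_{x_{i}}$ with $x_{i}\in X_{l}$, so every $\phi_{i}$ vanishes there and $Q(x)=0$, giving $\mathrm{range}(Q|_{X_{u}})=\{0\}$.

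There is no deep obstacle here: the content of the statement is precisely the availability of smooth bump functions on a smooth manifold. The only points requiring care are that the separating neighborhoods can indeed be chosen disjoint, which uses finiteness of $X$ together with the Hausdorff property, and that the bump functions built inside a chart are genuinely smooth as functions on $\mathcal{M}$ after extension by zero, which holds because their supports are compactly contained in the chart domain. Both are standard, so the main work is organizing these ingredients rather than overcoming any essential difficulty.
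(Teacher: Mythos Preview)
Your proposal is correct and follows essentially the same approach as the paper: both arguments separate the finitely many data points by pairwise disjoint open neighborhoods, build chart-local smooth bump functions supported in those neighborhoods around each labeled point, and define $Q$ as the label-weighted sum of these bumps. The only cosmetic difference is that you separate all points of $X$ at once, whereas the paper separates the labeled points while explicitly avoiding the unlabeled ones; the resulting constructions are identical in substance.
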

\begin{proof}
  Our proof is divided into two steps\\
    {\bf 1. Find disjoint regions} Since $X_{l}\cap X_{u}=\varnothing$, and $X_{u}$ is consisted of finite number of point in $\mathcal{M}$, we have $X_{l}\subset  X_{u}^{c}$, where $X_{u}^{c}$ is open. Since every single point is compact in finite Euclidean space, which is the ambient space of sub-manifold $\mathcal{M}$, for each $x\in X_{l}$, we can find an open set $V_{x}$ on $\mathcal{M}$ such that(See \cite{rudin1974real})
    $$\{x\}\subset V_{x}\subset\overline{V_{x}}\subset X_{u}^{c}\cap\left(\bigcup_{y\in X_{l},y\neq x}\{y\}^{c}\right)$$
    Here, $A^{c}=\mathcal{M}\backslash A$. Do the same process for each $x\in X_{l}$, then we obtain a collection of disjoint open set $\{V_{x}\}_{x\in X_{l}}$, whose elements have compact closure.
    \begin{figure}[H]
      \centering

\tikzset{every picture/.style={line width=0.75pt}} 

\begin{tikzpicture}[x=0.75pt,y=0.75pt,yscale=-1,xscale=1]

\draw  [fill={rgb, 255:red, 155; green, 155; blue, 155 }  ,fill opacity=0.55 ] (134.14,125.94) .. controls (164.14,126.94) and (211,103) .. (228,128) .. controls (245,153) and (285.14,200.94) .. (270.14,233.94) .. controls (255.14,266.94) and (146.14,288.94) .. (124.14,278.94) .. controls (102.14,268.94) and (84.14,232.94) .. (83.14,202.94) .. controls (82.14,172.94) and (104.14,124.94) .. (134.14,125.94) -- cycle ;
\draw    (167.14,178.94) ;
\draw [shift={(167.14,178.94)}, rotate = 0] [color={rgb, 255:red, 0; green, 0; blue, 0 }  ][fill={rgb, 255:red, 0; green, 0; blue, 0 }  ][line width=0.75]      (0, 0) circle [x radius= 3.35, y radius= 3.35]   ;
\draw    (200.14,203.94) ;
\draw [shift={(200.14,203.94)}, rotate = 0] [color={rgb, 255:red, 0; green, 0; blue, 0 }  ][fill={rgb, 255:red, 0; green, 0; blue, 0 }  ][line width=0.75]      (0, 0) circle [x radius= 3.35, y radius= 3.35]   ;
\draw    (158.14,213.94) ;
\draw [shift={(158.14,213.94)}, rotate = 0] [color={rgb, 255:red, 0; green, 0; blue, 0 }  ][fill={rgb, 255:red, 0; green, 0; blue, 0 }  ][line width=0.75]      (0, 0) circle [x radius= 3.35, y radius= 3.35]   ;
\draw [color={rgb, 255:red, 255; green, 255; blue, 255 }  ,draw opacity=1 ]   (208,244) ;
\draw [shift={(208,244)}, rotate = 0] [color={rgb, 255:red, 255; green, 255; blue, 255 }  ,draw opacity=1 ][fill={rgb, 255:red, 255; green, 255; blue, 255 }  ,fill opacity=1 ][line width=0.75]      (0, 0) circle [x radius= 3.35, y radius= 3.35]   ;
\draw [color={rgb, 255:red, 255; green, 255; blue, 255 }  ,draw opacity=1 ]   (106.14,236.94) ;
\draw [shift={(106.14,236.94)}, rotate = 0] [color={rgb, 255:red, 255; green, 255; blue, 255 }  ,draw opacity=1 ][fill={rgb, 255:red, 255; green, 255; blue, 255 }  ,fill opacity=1 ][line width=0.75]      (0, 0) circle [x radius= 3.35, y radius= 3.35]   ;
\draw [color={rgb, 255:red, 255; green, 255; blue, 255 }  ,draw opacity=1 ]   (175.14,202.94) ;
\draw [shift={(175.14,202.94)}, rotate = 0] [color={rgb, 255:red, 255; green, 255; blue, 255 }  ,draw opacity=1 ][fill={rgb, 255:red, 255; green, 255; blue, 255 }  ,fill opacity=1 ][line width=0.75]      (0, 0) circle [x radius= 3.35, y radius= 3.35]   ;
\draw [color={rgb, 255:red, 255; green, 255; blue, 255 }  ,draw opacity=1 ]   (164.14,242.94) ;
\draw [shift={(164.14,242.94)}, rotate = 0] [color={rgb, 255:red, 255; green, 255; blue, 255 }  ,draw opacity=1 ][fill={rgb, 255:red, 255; green, 255; blue, 255 }  ,fill opacity=1 ][line width=0.75]      (0, 0) circle [x radius= 3.35, y radius= 3.35]   ;
\draw [color={rgb, 255:red, 255; green, 255; blue, 255 }  ,draw opacity=1 ]   (226,166) ;
\draw [shift={(226,166)}, rotate = 0] [color={rgb, 255:red, 255; green, 255; blue, 255 }  ,draw opacity=1 ][fill={rgb, 255:red, 255; green, 255; blue, 255 }  ,fill opacity=1 ][line width=0.75]      (0, 0) circle [x radius= 3.35, y radius= 3.35]   ;
\draw [color={rgb, 255:red, 255; green, 255; blue, 255 }  ,draw opacity=1 ]   (136.14,155.94) ;
\draw [shift={(136.14,155.94)}, rotate = 0] [color={rgb, 255:red, 255; green, 255; blue, 255 }  ,draw opacity=1 ][fill={rgb, 255:red, 255; green, 255; blue, 255 }  ,fill opacity=1 ][line width=0.75]      (0, 0) circle [x radius= 3.35, y radius= 3.35]   ;
\draw [color={rgb, 255:red, 255; green, 255; blue, 255 }  ,draw opacity=1 ]   (183.14,130.94) ;
\draw [shift={(183.14,130.94)}, rotate = 0] [color={rgb, 255:red, 255; green, 255; blue, 255 }  ,draw opacity=1 ][fill={rgb, 255:red, 255; green, 255; blue, 255 }  ,fill opacity=1 ][line width=0.75]      (0, 0) circle [x radius= 3.35, y radius= 3.35]   ;
\draw [color={rgb, 255:red, 255; green, 255; blue, 255 }  ,draw opacity=1 ]   (251.14,205.94) ;
\draw [shift={(251.14,205.94)}, rotate = 0] [color={rgb, 255:red, 255; green, 255; blue, 255 }  ,draw opacity=1 ][fill={rgb, 255:red, 255; green, 255; blue, 255 }  ,fill opacity=1 ][line width=0.75]      (0, 0) circle [x radius= 3.35, y radius= 3.35]   ;
\draw  [fill={rgb, 255:red, 155; green, 155; blue, 155 }  ,fill opacity=0.55 ] (414.14,122) .. controls (444.14,123) and (491,99.06) .. (508,124.06) .. controls (525,149.06) and (565.14,197) .. (550.14,230) .. controls (535.14,263) and (426.14,285) .. (404.14,275) .. controls (382.14,265) and (364.14,229) .. (363.14,199) .. controls (362.14,169) and (384.14,121) .. (414.14,122) -- cycle ;
\draw    (454.14,177.94) ;
\draw [shift={(454.14,177.94)}, rotate = 0] [color={rgb, 255:red, 0; green, 0; blue, 0 }  ][fill={rgb, 255:red, 0; green, 0; blue, 0 }  ][line width=0.75]      (0, 0) circle [x radius= 3.35, y radius= 3.35]   ;
\draw    (487.14,202.94) ;
\draw [shift={(487.14,202.94)}, rotate = 0] [color={rgb, 255:red, 0; green, 0; blue, 0 }  ][fill={rgb, 255:red, 0; green, 0; blue, 0 }  ][line width=0.75]      (0, 0) circle [x radius= 3.35, y radius= 3.35]   ;
\draw    (445.14,212.94) ;
\draw [shift={(445.14,212.94)}, rotate = 0] [color={rgb, 255:red, 0; green, 0; blue, 0 }  ][fill={rgb, 255:red, 0; green, 0; blue, 0 }  ][line width=0.75]      (0, 0) circle [x radius= 3.35, y radius= 3.35]   ;
\draw [color={rgb, 255:red, 255; green, 255; blue, 255 }  ,draw opacity=1 ]   (495,243) ;
\draw [shift={(495,243)}, rotate = 0] [color={rgb, 255:red, 255; green, 255; blue, 255 }  ,draw opacity=1 ][fill={rgb, 255:red, 255; green, 255; blue, 255 }  ,fill opacity=1 ][line width=0.75]      (0, 0) circle [x radius= 3.35, y radius= 3.35]   ;
\draw [color={rgb, 255:red, 255; green, 255; blue, 255 }  ,draw opacity=1 ]   (393.14,235.94) ;
\draw [shift={(393.14,235.94)}, rotate = 0] [color={rgb, 255:red, 255; green, 255; blue, 255 }  ,draw opacity=1 ][fill={rgb, 255:red, 255; green, 255; blue, 255 }  ,fill opacity=1 ][line width=0.75]      (0, 0) circle [x radius= 3.35, y radius= 3.35]   ;

\draw [color={rgb, 255:red, 255; green, 255; blue, 255 }  ,draw opacity=1 ]   (462.14,201.94) ;
\draw [shift={(462.14,201.94)}, rotate = 0] [color={rgb, 255:red, 255; green, 255; blue, 255 }  ,draw opacity=1 ][fill={rgb, 255:red, 255; green, 255; blue, 255 }  ,fill opacity=1 ][line width=0.75]      (0, 0) circle [x radius= 3.35, y radius= 3.35]   ;

\draw [color={rgb, 255:red, 255; green, 255; blue, 255 }  ,draw opacity=1 ]   (451.14,241.94) ;
\draw [shift={(451.14,241.94)}, rotate = 0] [color={rgb, 255:red, 255; green, 255; blue, 255 }  ,draw opacity=1 ][fill={rgb, 255:red, 255; green, 255; blue, 255 }  ,fill opacity=1 ][line width=0.75]      (0, 0) circle [x radius= 3.35, y radius= 3.35]   ;

\draw [color={rgb, 255:red, 255; green, 255; blue, 255 }  ,draw opacity=1 ]   (513,165) ;
\draw [shift={(513,165)}, rotate = 0] [color={rgb, 255:red, 255; green, 255; blue, 255 }  ,draw opacity=1 ][fill={rgb, 255:red, 255; green, 255; blue, 255 }  ,fill opacity=1 ][line width=0.75]      (0, 0) circle [x radius= 3.35, y radius= 3.35]   ;

\draw [color={rgb, 255:red, 255; green, 255; blue, 255 }  ,draw opacity=1 ]   (423.14,154.94) ;
\draw [shift={(423.14,154.94)}, rotate = 0] [color={rgb, 255:red, 255; green, 255; blue, 255 }  ,draw opacity=1 ][fill={rgb, 255:red, 255; green, 255; blue, 255 }  ,fill opacity=1 ][line width=0.75]      (0, 0) circle [x radius= 3.35, y radius= 3.35]   ;

\draw [color={rgb, 255:red, 255; green, 255; blue, 255 }  ,draw opacity=1 ]   (469.14,129.94) ;
\draw [shift={(469.14,129.94)}, rotate = 0] [color={rgb, 255:red, 255; green, 255; blue, 255 }  ,draw opacity=1 ][fill={rgb, 255:red, 255; green, 255; blue, 255 }  ,fill opacity=1 ][line width=0.75]      (0, 0) circle [x radius= 3.35, y radius= 3.35]   ;

\draw [color={rgb, 255:red, 255; green, 255; blue, 255 }  ,draw opacity=1 ]   (538.14,204.94) ;
\draw [shift={(538.14,204.94)}, rotate = 0] [color={rgb, 255:red, 255; green, 255; blue, 255 }  ,draw opacity=1 ][fill={rgb, 255:red, 255; green, 255; blue, 255 }  ,fill opacity=1 ][line width=0.75]      (0, 0) circle [x radius= 3.35, y radius= 3.35]   ;
 
\draw   (439.14,177.94) .. controls (439.14,169.66) and (445.86,162.94) .. (454.14,162.94) .. controls (462.42,162.94) and (469.14,169.66) .. (469.14,177.94) .. controls (469.14,186.23) and (462.42,192.94) .. (454.14,192.94) .. controls (445.86,192.94) and (439.14,186.23) .. (439.14,177.94) -- cycle ;

\draw   (475.14,202.94) .. controls (475.14,196.32) and (480.51,190.94) .. (487.14,190.94) .. controls (493.77,190.94) and (499.14,196.32) .. (499.14,202.94) .. controls (499.14,209.57) and (493.77,214.94) .. (487.14,214.94) .. controls (480.51,214.94) and (475.14,209.57) .. (475.14,202.94) -- cycle ;

\draw   (433.14,212.94) .. controls (433.14,206.32) and (438.51,200.94) .. (445.14,200.94) .. controls (451.77,200.94) and (457.14,206.32) .. (457.14,212.94) .. controls (457.14,219.57) and (451.77,224.94) .. (445.14,224.94) .. controls (438.51,224.94) and (433.14,219.57) .. (433.14,212.94) -- cycle ;

\draw (128,256) node [anchor=north west][inner sep=0.75pt]   [align=left] {$\displaystyle \mathcal{M}$};

\draw (410,252) node [anchor=north west][inner sep=0.75pt]   [align=left] {$\displaystyle \mathcal{M}$};

\draw (469,160) node [anchor=north west][inner sep=0.75pt]   [align=left] {$\displaystyle V_{x}$};

\end{tikzpicture}
      \caption{The main idea of Proof:White-colored circle represents the point in $X_{u}$, it is removed from $\mathcal{M}$. The black-colored circle is in $X_{l}$. We can find a collection of disjoint closed sets containing them.}
      \label{}
    \end{figure}
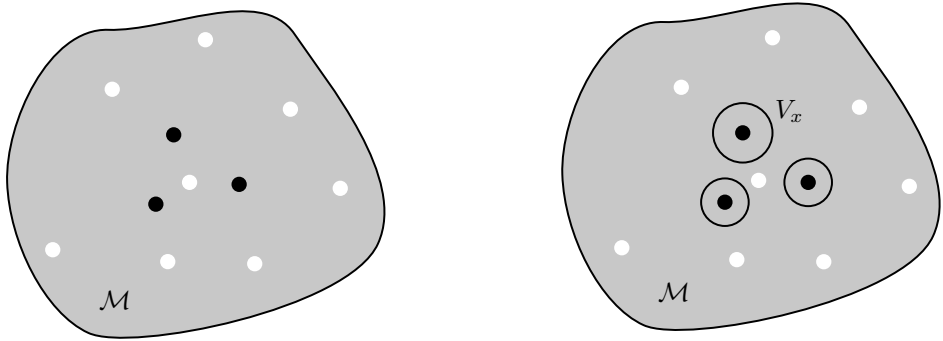
    \noindent {\bf 2. Construction of smooth extended function:} Now construct the extended function. According to the definition of sub-manifold, there exists an atlas $\{(U_{\alpha},\Phi_{\alpha})\}_{\alpha\in\Lambda}$,  for each $x\in\mathcal{M}$, we can find $U_{\alpha_{0}}$ cover $x$ and open set $V\subset \mathbb{R}^{n}$, such that $\{U_{\alpha}\}_{\alpha\in\Lambda}$ is an open cover of $\mathcal{M}$ and $\{\Phi_{\alpha}\}_{\alpha\in\Lambda}$ is a collection of local diffeomorphism. Choose an arbitrary $V_{x_{0}}\in\{V_{x}\}_{x\in X_{l}}$, then we can find finite many ${U_\alpha}$ cover $\overline{V}_{x_{0}}$, denoted by $\{U_i\}_{i=1}^{N}$.
    Choose $U_{i_{0}}$ contains $x_{0}$, after translation, we can find open set $W$ and $\Phi_{i_{0}}$ such that
    $$\Phi_{i_{0}}: U_{i_{0}}\cap V_{x_{0}}\to (\mathbb{R}^{d}\times\{0\}^{n-d})\cap W$$
    is a $C^{\infty}$ diffeomorphism, and $\Phi(x_{0})=0$.
    Considering $$f(x) =\left\{\begin{array}{ll}
      e^{\frac{1}{|x|^2-1}+1}& x\in(-1,1)\\
      0                      & \mbox{otherwise}
    \end{array}\right.$$, we can prove that it is $C^{\infty}$. Since $U_{i_{0}}\cap V_{x_{0}}$ is open set, there exists $r_{0}>0$ such that the pre-image of open ball $\Phi^{-1}(B(0,r_{0})\cap (\mathbb{R}^{d}\times\{0\}^{n-d}))$ contained by this open set. Let $\psi(x) = f(\frac{\left\lVert x \right\rVert _{2} }{r_{0}} )$, $\varphi(x) = \Phi^{*}\psi(x)=(\psi\circ\Phi)(x)$, then $\varphi$ is $C^{\infty}$ function in $U_{i_{0}}\cap V_{x_{0}}$, and $\mathrm{supp}\varphi\subset\subset U_{i_{0}}\cap V_{x_{0}}$. Finally, let $$\rho(x) =\left\{\begin{array}{ll}
      y_{0}\varphi(x)& x\in U_{i_{0}}\cap V_{x_{0}}\\
      0         & x\in \mathcal{M}\backslash (U_{i_{0}}\cap V_{x_{0}})
    \end{array}\right.$$
    where $y_{0}$ is the label of $x_{0}$. It can be seen that $\rho$ is a smooth function on $\mathcal{M}$, and $\rho(x_{0})=y_{0}$. Do same process for each $V_{x}$, we obtain a collection of smooth function $\{\rho_{x}\}_{x\in X_{l}}$, and for each $x\in X_{l}$, $\rho_{x}$ compact support is contained by $V_{x}$. Therefore, we desired extended label propagation function can be expressed as 
    $$\bar{u} = \sum_{x\in X_{l}}\rho_{x}$$
\end{proof}
The extended label propagation function follows the heat conduction equation. Since the function values remain unchanged on the labeled dataset, these data points can be considered constant-temperature heat sources. Therefore, the extended label propagation follows the following diffusion equation(See \cite{thermal}):
\begin{gather}
  \left\{\begin{array}{l}
    \p_{t}u(x,t) = \Delta u(x,t) + Q(x) \quad (x,t)\in\mathcal{M}\times \mathbb{R}_{+}\\
    u(x,0) = Q(x)\quad x\in\mathcal{M}
  \end{array}\right.
\end{gather}
where $Q(x)$ is the label distribution of labeled data points, and for convenience, we denote $u$ as the extended label propagation function in this appendix. We first prove that after a sufficiently long time, the distribution of labels on the dataset will reach a steady state. 
\begin{theorem}
  $\mathcal{M}$ is a $d$ dimensional compact sub-manifold in $\mathbb{R}^{n}$, when $\mathrm{Ric}(\mathcal{M})\geqslant 0$, that is, for any $w\in T\mathcal{M}$,
  $\mathrm{Ric}(w,w)\geqslant 0$. Then there exists function $f(x)$ such that 
  $$\lim_{t \to \infty}u(x,t) = f(x)$$
\end{theorem}
\begin{proof}
  Firstly, we perform a variable substitution on the equation. Since the Poisson equation on a compact manifold must have a fundamental solution $G(x,y)$(See \cite{Audin}), let $w$ satisfy the equation: $w(x) = \int_{\mathcal{M}}G(x,y)Q(y)\,dy$. Then we can transform the equation(.1) into
$$\p_{t}(u(x,t)+w(x)) =\Delta (u(x,t)+w(x))$$
For convenience in discussion, let $v(x,t)=u(x,t)+w(x)$, then we express the equation in operator form, 
$$v(x,t)= e^{t\Delta}v(x,0)=\int_{\mathcal{M}}p_{t}(x,y)v(y,0)\,dy$$
where $p_{t}(x,y)$ is the fundamental solution of the heat diffusion equation on the manifold. Although we cannot find an explicit solution to the heat equation on the manifold, intuitively, if there are no constant-temperature heat sources in space, after a sufficiently long time, the heat in space will dissipate completely. Therefore, we speculate that the solution to the heat equation should contain a decaying term. In fact, according to the work of Yau and Li in 1986(See \cite{li1986parabolic}), a fundamental solution to the heat equation on a complete Riemann manifold with $\mathrm{Ric}(\mathcal{M})\geqslant -K^2$ can be controlled by Gaussian functions, i.e., it has an upper bound of the following form:
$$p_{t}(x,y) \leqslant  \frac{C(\delta,d)}{|B(x,\sqrt{t})|^{\frac{1}{2}}|B(y,\sqrt{t})|^{\frac{1}{2}}}e^{-\frac{d_{G}(x,y)^2}{(4+\delta)t}} e^{C_{1}(d)\delta K^2 t}$$
where $C(\delta,d)$, $C_{1}(d)$, $\delta$ are constants determined by manifold itself. In particular, when $K=0$, we can prove that
$$p_{t}(x,y)\leqslant C \frac{e^{-\frac{d_{G}(x,y)^2}{ct}}}{|B(x,\sqrt{t})|}$$ 
In fact, according to Bishop comparison theorem(See \cite{doCarmo}), 
$$\frac{|B(x,r)|}{|B(x,s)|}\leqslant \left(\frac{r}{s}\right)^{d}$$
for any $r\geqslant s$, we have following simple estimation:
$$\frac{|B(x,\sqrt{t})|}{|B(y,\sqrt{t})|}\leqslant \frac{|B(y,\sqrt{t}+d_{G}(x,y))|}{|B(y,\sqrt{t})|}\leqslant\left(\frac{\sqrt{t}+d(x,y)}{\sqrt{t}}\right)^{d}$$
Since there exists constant $C$ such that $\ln(1+m)< Cm^2$, let $m=\frac{d_{G}(x,y)}{\sqrt{t}}$, then we can find a constant $c$ such that
$$\left(1+\frac{d(x,y)}{\sqrt{t}}\right)^{\frac{n}{2}}\leqslant C e^{C' \frac{d_{G}(x,y)^2}{t}}$$
i.e. $$C(\delta,n) e^{- \frac{d_{G}(x,y)^2}{(4+\delta)t}}\left(1+\frac{d(x,y)}{\sqrt{t}}\right)^{\frac{n}{2}}\leqslant C e^{-\frac{d_{G}(x,y)^2}{ct}}$$
Therefore, there exists positive constants $C$, $c$ such that
$$p_{t}(x,y)\leqslant \frac{C(\delta,n)}{|B(x,\sqrt{t})|} \left(1+\frac{d(x,y)}{\sqrt{t}}\right)^{\frac{d}{2}}e^{- \frac{d_{G}(x,y)^2}{(4+\delta)t}} \leqslant\frac{C}{|B(x,\sqrt{t})|} e^{-\frac{d_{G}(x,y)^2}{ct}}$$
Since $p_{t}$ is positive, according to dominant convergent theorem, we have
$$\lim_{t \to \infty}v(x,t)= \lim_{t \to \infty}\int_{\mathcal{M}}p_{t}(x,y)v(y,0)\,dy\leqslant C\int_{\mathcal{M}}\lim_{t \to \infty}\frac{e^{-\frac{d_{G}(x,y)^2}{ct}}}{t^{\frac{d}{2}}}v(y,0)\,dy=0$$
This proves that our conjecture is correct: when the time is sufficiently long, the heat distribution on the manifold will reach a steady state, i.e.
$$\lim_{t \to \infty}u(x,t) = -w(x)$$
where $\Delta w(x) =Q(x)$
\end{proof}

\end{document}